%% file: graphood_conference.tex
\documentclass{article} 
\usepackage{graphood_conference,times}

\input{math_commands.tex}

\usepackage[table,dvipsnames]{xcolor}
\usepackage{pifont}
\usepackage{hyperref}
\usepackage{url}
\usepackage{amsmath}
\usepackage{amssymb}
\usepackage{amsthm}
\usepackage{mathtools}
\usepackage{booktabs}
\usepackage{algorithm}
\usepackage{algpseudocode}
\usepackage{graphicx}
\usepackage{wrapfig}
\usepackage{caption}
\usepackage{enumitem}
\usepackage{xspace}
\usepackage{tcolorbox}
\tcbuselibrary{breakable}
\usepackage[capitalise,noabbrev]{cleveref}
\crefname{theorem}{Theorem}{Theorems}
\crefname{lemma}{Lemma}{Lemmas}
\crefname{proposition}{Proposition}{Propositions}
\crefname{corollary}{Corollary}{Corollaries}
\makeatletter
\renewcommand{\paragraph}{\@startsection{paragraph}{4}{\z@}{0.5ex plus 0.2ex minus .1ex}{-1em}{\normalsize\bf}}
\makeatother

\definecolor{paperaccent}{HTML}{6A3D9A}  
\definecolor{paperobs}{HTML}{1F4E79}     
\definecolor{papercite}{HTML}{168AAD}    
\definecolor{paperbox}{HTML}{F3F7FA}     
\definecolor{tabhead}{HTML}{E4EDF6}      
\definecolor{tabalt}{HTML}{F5F5F5}       
\definecolor{tabours}{HTML}{EDE7F6}      
\hypersetup{colorlinks=true,linkcolor=paperaccent,citecolor=papercite,urlcolor=papercite}
\newcommand{\pointmark}[1]{\textcolor{paperaccent}{\ding{#1}}}
\newcommand{\key}[1]{\textbf{#1}}
\newcommand{\obs}[1]{\textcolor{paperobs}{\textbf{#1}}}
\newcommand{\pcref}[1]{\textcolor{paperaccent}{\cref{#1}}}
\newtcolorbox{corebox}{
  colback=paperbox, colframe=paperaccent!55!black,
  boxrule=0.45pt, arc=1.5pt,
  left=5pt, right=5pt, top=4pt, bottom=4pt,
  breakable
}

\theoremstyle{plain}
\newtheorem{theorem}{Theorem}[section]
\newtheorem{proposition}[theorem]{Proposition}
\newtheorem{lemma}[theorem]{Lemma}

\theoremstyle{definition}

\theoremstyle{remark}

\newcommand{\ours}{\textsc{Scarce}\xspace}

\newcommand{\mbf}[1]{\mathbf{#1}}

\title{Implicit Feature Reweighting for Label-Free Graph Generalization}

\author{%
  Tianjun Yao\textsuperscript{1}\quad
  Chen Xi\textsuperscript{2,1}\quad
  Xinyang Li\textsuperscript{3,1}\quad
  Lixian Zhang\textsuperscript{4,1}\\[3pt]
  \normalfont\textsuperscript{1}Shenzhen Loop Area Institute\\
  \normalfont\textsuperscript{2}Guangzhou University\\
  \normalfont\textsuperscript{3}North China Electric Power University\\
  \normalfont\textsuperscript{4}University of Science and Technology of China
}
\hypersetup{pdfauthor={Tianjun Yao, Chen Xi, Xinyang Li, Lixian Zhang}}

\iclrfinalcopy
\begin{document}

\maketitle
\lhead{Preprint}
\suppressfloats[t]

\begin{abstract}
Graph-level out-of-distribution (OOD) generalization is critical for deploying graph neural networks in real-world applications where distribution shifts are inevitable. Existing approaches typically require class labels during representation learning to decide which substructures are predictive, limiting their applicability. We propose \ours, a label-free representation-learning framework that yields distribution-shift-robust graph representations from an environment partition of the training graphs alone, whether that partition comes from dataset metadata or, when none is available, from clustering a self-supervised embedding. Our method operates in two stages: (1)~a variational graph autoencoder is warm-started for graph reconstruction; and (2)~an adversarial confusion mechanism forces the encoder to produce representations from which a discriminator cannot distinguish environments, while preserving graph-reconstruction fidelity. We provide theoretical analysis showing that environment confusion contracts environment-discriminative directions in the representation space, which, under regularized downstream classifiers, is equivalent to imposing a stronger directional penalty on environment-associated features. This feature-reweighting mechanism reduces reliance on spurious correlations without erasing information, offering a label-free, continuous relaxation of last-layer reweighting methods. Experiments on a controlled synthetic model, the GOOD-Motif benchmark, and molecular property prediction under assay, scaffold, and size shifts show that \ours achieves state-of-the-art OOD generalization among label-free methods, using no class labels during representation learning.
\end{abstract}

\input{Sections/intro}

\input{Sections/related_work}

\input{Sections/problem_setting}

\input{Sections/method}

\input{Sections/experiments}

\input{Sections/conclusion}

\bibliography{graphood_conference}
\bibliographystyle{graphood_conference}

\clearpage
\appendix
\begin{center}{\Large\sc Appendix}\end{center}
\vspace{-0.5ex}
\input{Sections/appendix}

\end{document}

%% file: math_commands.tex
\usepackage{amsmath,amsfonts,bm}

\def\eqref#1{equation~\ref{#1}}

\def\1{\bm{1}}

\DeclareMathAlphabet{\mathsfit}{\encodingdefault}{\sfdefault}{m}{sl}
\SetMathAlphabet{\mathsfit}{bold}{\encodingdefault}{\sfdefault}{bx}{n}



%% file: Sections/intro.tex
\section{Introduction}
\label{sec:intro}

Machine learning models are routinely deployed on data whose distribution differs from that of their training set, and their tendency to latch onto features that are predictive only in training makes such shifts a primary cause of failure~\citep{geirhos2020shortcut,gulrajani2020search}. Graph data sharpens the problem: molecules are tested on scaffolds absent from training, and graph size or base-graph structure drifts between collections~\citep{gui2022good,ji2022drugood,li2025out}. A substantial body of work now targets out-of-distribution (OOD) generalization on graphs, typically by extracting an invariant subgraph or enforcing invariance across environments~\citep{wu2022discovering,chen2022ciga,miao2022interpretable,yang2022learning,wu2022handling,gui2024joint}. These methods share a prerequisite that is easy to overlook: they need class labels to decide which substructures are predictive, environment labels to decide which correlations are unstable, or both.

Class labels are the expensive part. In drug discovery a single assay label is the outcome of a wet-lab experiment, and many DrugOOD assays contain only tens of labeled molecules, whereas unlabeled molecular graphs are available by the tens of millions~\citep{ji2022drugood,hu2019strategies,rong2020self}. Environment information, by contrast, is cheap but rough: a benchmark may ship a domain index, a scaffold can be computed although which scaffolds form an environment is a modeling decision, and for proteins or social networks the partition has to be inferred from the data itself~\citep{lin2022zin,yao2024empowering}. A method that learns shift-robust representations from an environment partition alone, with no class labels, and that tolerates a coarse or inferred partition, would therefore cover the common case. This setting remains largely unexplored on graphs: the few existing label-free methods add invariance-inspired augmentations to a contrastive objective~\citep{li2022let,li2024disentangled}, and none establishes what the learned representation actually retains.

\begin{figure}[t]
\centering
\includegraphics[width=0.9\linewidth]{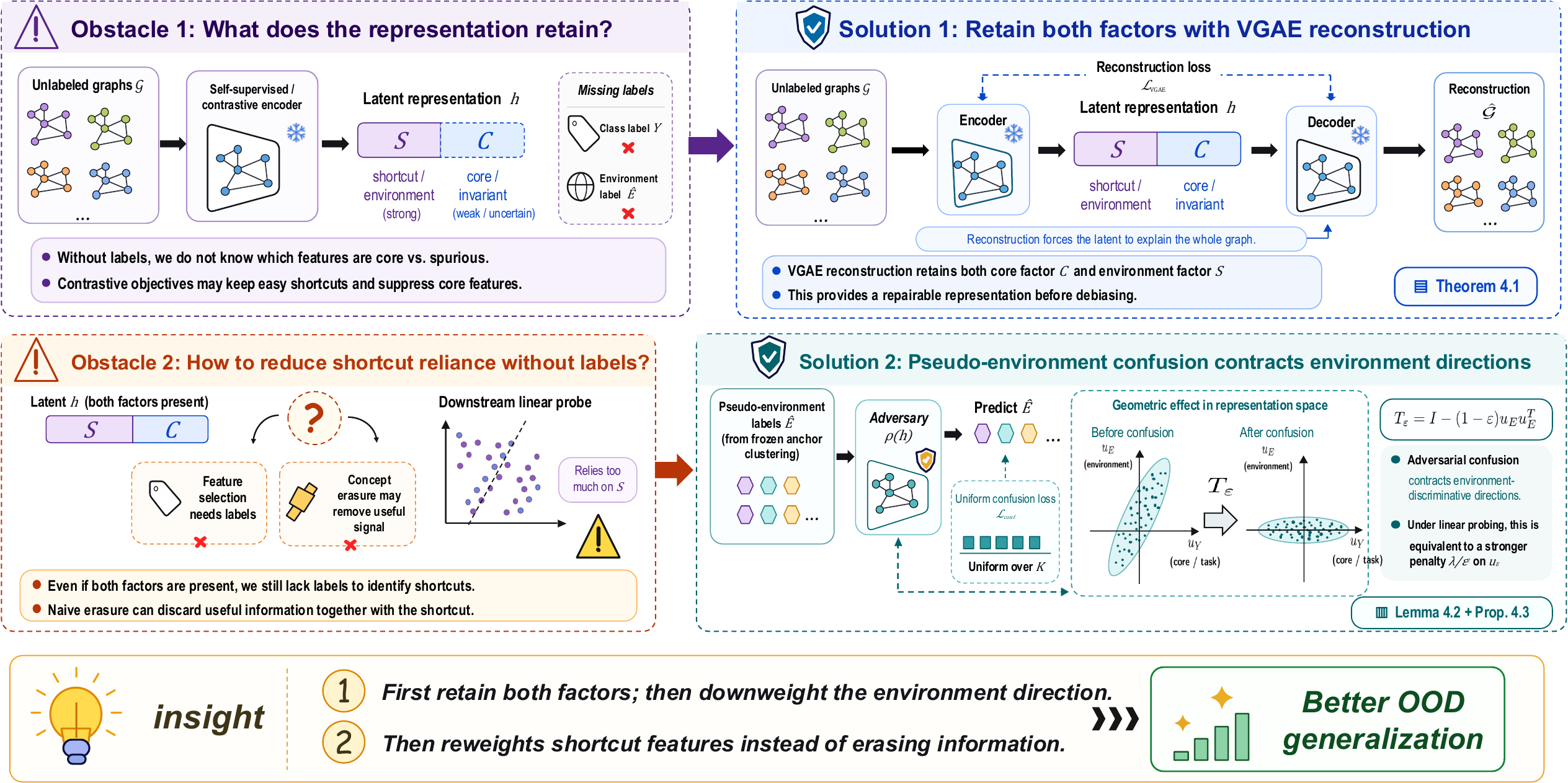}
\caption{Overview of the \ours framework. Given an environment partition of the training graphs (dataset metadata or, when unavailable, clusters of a frozen self-supervised embedding), (1)~a VGAE is warm-started for graph reconstruction and (2)~an adversarial confusion mechanism forces the encoder to produce representations indistinguishable across environments while preserving reconstruction fidelity.}
\label{fig:framework}
\end{figure}

Removing class supervision raises two obstacles, and each points to one of our contributions. \key{The first concerns what the representation contains.} Without labels one can neither tell the encoder which features to keep nor check afterwards which it kept, and a contrastive objective offers no guarantee: it is minimized by whichever features most easily discriminate instances and suppresses the rest~\citep{robinson2021can}, and a representation that has discarded the core features cannot be repaired downstream. Likelihood objectives behave differently: because the latent must explain the whole input, they keep core features alongside shortcuts~\citep{yang2022chroma,li2025generative,muller2024shortcut,qiang2025out}. We show that this holds for graph reconstruction without any auxiliary variable: for a variational graph autoencoder~\citep{kipf2016variational}, reconstruction alone forces the latent to retain \emph{both} environment-specific and core factors (\pcref{thm:retention}), and both remain linearly accessible. \key{The second obstacle concerns what to do with them.} No class label says which factor is spurious, and erasing environment information outright discards useful signal along with the shortcut~\citep{belrose2023leace,lin2024spurious}. We show that neither selection nor erasure is needed: it suffices to \emph{shrink} the environment-associated directions, because any $\ell_2$-regularized downstream classifier then finds them expensive to use and shifts its weight onto the core features. We realize this shrinkage with an adversarial confusion loss against the training environments, which asks nothing of the partition beyond that it separate the spurious factor, and call the framework \ours (\textbf{S}hortcut \textbf{C}ontraction via \textbf{A}dversarial \textbf{R}epresentation \textbf{C}ompression for \textbf{E}nvironments).

Our contributions are as follows.
\begin{itemize}[leftmargin=1.4em,itemsep=1pt,topsep=2pt,label={}]
\item[\pointmark{172}] \key{Retention by reconstruction.} We prove that VGAE reconstruction alone forces the latent to retain both environment-specific and core features (\pcref{thm:retention}), and verify that both are linearly accessible.
\item[\pointmark{173}] \key{Contraction as a ridge penalty.} We propose a confusion loss that scales down environment-discriminative directions rather than erasing them, prove that this contraction (\pcref{lem:contraction}) is exactly equivalent to an anisotropic ridge penalty on the downstream probe (\pcref{prop:ridge}), and characterize exactly when it helps.
\item[\pointmark{174}] \key{Empirical validation.} Empirically, \ours attains state-of-the-art label-free OOD accuracy on GOOD-Motif and is best in 17 of 18 DrugOOD settings, where no environment annotation exists and clustered pseudo-environments suffice; a controlled synthetic study confirms the two preconditions identified by the theory.
\end{itemize}

\pcref{sec:related_work} positions \ours against supervised and contrastive alternatives, \pcref{sec:problem_setting} and \pcref{sec:method} state the setting and the two guarantees, and \pcref{sec:experiments} tests the mechanism from a synthetic model up to DrugOOD.

%% file: Sections/related_work.tex
\section{Related Work}
\label{sec:related_work}

\paragraph{OOD generalization with labeled supervision.}
Most OOD methods prevent a model from relying on features whose correlation with the label is unstable~\citep{geirhos2020shortcut,gulrajani2020search}: invariant risk minimization~\citep{arjovsky2020invariant}, group-robust optimization~\citep{sagawa2020distributionally}, and domain-adversarial alignment~\citep{ganin2016domain} use environment or group annotations, and environment inference~\citep{creager2021environment} substitutes class labels for them. \key{In every case some form of supervision decides which correlations are unstable.}

\paragraph{Graph OOD generalization.}
Graph OOD methods either extract an invariant subgraph whose relation to the label is stable~\citep{wu2022discovering,chen2022ciga,miao2022interpretable,li2022learning,yang2022learning,chen2023does} or enforce invariance across given or generated environments~\citep{wu2022handling,gui2024joint}; in both cases a class label decides which substructures are predictive. \key{\ours uses no class labels during representation learning}; it needs only an environment partition, which may be given or inferred, and it does not select subgraphs.

\paragraph{Unsupervised graph representation learning.}
Unsupervised graph representation learning is mature~\citep{sun2019infograph,you2020graph,hassani2020contrastive,hou2022graphmae}, and unsupervised domain generalization has been formulated for images~\citep{zhang2022towards}, but their intersection on graphs is thin. RGCL~\citep{li2022let} adds rationale-inspired augmentations to a contrastive objective, and OOD-GCL~\citep{li2024disentangled}, the closest work to ours, learns a channel-wise disentangled contrastive encoder and penalizes each factor's contrastive loss for varying across inferred environments. Both are contrastive at the core, which suppresses features not needed to tell instances apart~\citep{robinson2021can}, and both make environment information invariant; \ours instead trains by reconstruction, which provably retains core and environment factors alike (\pcref{thm:retention}), and contracts environment directions rather than erasing them (\pcref{sec:theory}). When no environment annotation is available we borrow the construction of \citet{yao2024empowering}, who cluster a frozen self-supervised embedding in place of environment labels but then train a supervised graph OOD model; we use the same pseudo-environments inside a label-free objective (\pcref{sec:real_world}).

\paragraph{Feature reweighting and concept erasure.}
Two lines of work act on a representation after it is learned. Last-layer retraining~\citep{kirichenko2023last} re-fits the classifier on group-balanced data, reweighting core over spurious features at the cost of group labels. Linear concept erasure~\citep{ravfogel2022linear,belrose2023leace} removes every linearly decodable trace of a protected attribute, discarding task-useful environment signal as well~\citep{lin2024spurious}, and the probes that certify removal are themselves unreliable~\citep{kumar2024probing}. \key{\ours sits between the two:} it reweights rather than erases, acts in representation space rather than on the classifier, and, since the contracted direction is chosen by an adversary against environments, needs neither group-balanced labeled data nor a protected attribute.

%% file: Sections/problem_setting.tex
\section{Problem Setting}
\label{sec:problem_setting}

We consider graph-level \emph{unsupervised domain generalization}: learning graph representations that are robust to distribution shifts, without access to class labels during representation learning.

Let the training set contain $N$ graphs $\{G_1, G_2, \dots, G_N\}$, where each graph $G_i = (\mathbf{X}_i, \mathbf{A}_i)$ consists of a node feature matrix $\mathbf{X}_i \in \mathbb{R}^{n_i \times d_{\mathrm{in}}}$ and an adjacency structure $\mathbf{A}_i$. The training graphs are drawn from a mixture of environments (domains), and we seek representations that generalize to unseen test environments. Each training graph carries an environment index $e_i \in \{1, \dots, K\}$. The partition may be supplied by the dataset (a domain index) or, absent such metadata, inferred without labels by clustering a self-supervised embedding (\pcref{sec:real_world}); nothing below depends on which.

\paragraph{Latent generative factors.} We posit that each graph is generated from three latent factors through an unknown map $g$,
\begin{equation}
G = g(C, S, \xi),
\label{eq:generative_model}
\end{equation}
where $C$ is the \emph{core} (invariant) factor that determines the label $Y$ in every environment, $S$ is the \emph{environment} (spurious) factor whose correlation with $Y$ varies across environments $e$, and $\xi$ collects residual variation independent of $(C, S)$. A predictor that relies on $S$ can succeed in training environments where the $S$--$Y$ correlation is strong yet fail on test environments where it weakens or reverses, so $C$ is the only reliable predictor of $Y$. We write $Z = f_\theta(G)$ for the latent representation produced by a graph encoder $f_\theta$, viewed as a random variable, and $\mathbf{h}_G \in \mathbb{R}^d$ for its value on a given graph $G$. For a reconstruction-based encoder--decoder pair, $L_{\mathrm{rec}} = \mathbb{E}\!\left[-\log p_\psi(G \mid Z)\right]$ denotes the expected reconstruction loss (negative log-likelihood, in nats), i.e., the reconstruction term of the evidence lower bound.

\paragraph{Encoder.} We encode graphs with message-passing GNNs~\citep{xu2019powerful,kipf2017semi}: $L$ layers of neighborhood aggregation produce node embeddings $\mathbf{h}_v^{(L)}$, and the graph-level latent is obtained by sum pooling, $\mathbf{h}_G = \sum_{v \in G} \mathbf{h}_v^{(L)}$. GIN is our default backbone; we also report GCN.

\paragraph{Objective.} Let $e \in \mathcal{E}$ index the environments. The goal is to learn a graph encoder $f_\theta$ whose representations, when paired with a downstream classifier $c_\phi$, minimize the worst-case risk across all environments including unseen ones:
\begin{equation}
\min_\theta \;\sup_{e \in \mathcal{E}_{\mathrm{all}}} \; R^e(\theta), \qquad R^e(\theta) = \mathbb{E}_{(G, Y) \sim P^e}\!\left[\ell\!\left(c_\phi(f_\theta(G)),\, Y\right)\right].
\label{eq:ood_risk}
\end{equation}
We call this setting \emph{label-free}: no class label $Y$ is available during representation learning, the environment index $e$ is the only side information, and it may itself be inferred. Class labels are used \emph{only} for fitting a linear probe at evaluation time. This constraint rules out existing invariant learning methods~\citep{arjovsky2020invariant,wu2022handling,chen2022ciga}, which need class labels to decide which correlations are unstable.

%% file: Sections/method.tex
\section{Method}
\label{sec:method}

We introduce \ours, a framework for graph-level unsupervised domain generalization. The key insight is that a standard VGAE learns representations encoding \emph{all} predictive variation, including environment-specific features that may serve as spurious shortcuts. \ours addresses this by augmenting the VGAE objective with an adversarial confusion mechanism that selectively suppresses environment-discriminative directions in the representation, while preserving graph-reconstruction fidelity. An overview is presented in \pcref{fig:framework} and \pcref{alg:guide_conf}.

\subsection{What Features Does a VGAE Retain?}
\label{sec:vgae_features}

Our method relies on a VGAE whose latent encodes \emph{both} generative factors: without $S$ there is nothing for the adversary to contract, and without $C$ contraction cannot help. Identifiability results for VAE-style models~\citep{khemakhem2020variational,zheng2022identifiability,kong2023partial,sun2021recovering,wang2022causal} and the shortcut-robustness of generative classifiers~\citep{li2025generative,qiang2025out,muller2024shortcut} both trace factor retention to the reconstruction objective. We ask whether VGAE reconstruction \emph{forces} retention of both factors, and answer affirmatively.

\begin{theorem}[Reconstruction forces retention of both factors]
\label[theorem]{thm:retention}
Let $G = g(C, S, \xi)$ as in \cref{eq:generative_model} and let $Z = f_\theta(G)$ be the encoder latent. Assume \textup{(A1)} the generative map $g$ is injective in each argument with the others held fixed, and \textup{(A2)} $G$ is discrete with $H(S \mid C,\xi)$ and $H(C \mid S,\xi)$ finite and positive. Then any encoder--decoder pair attaining reconstruction loss $L_{\mathrm{rec}}$ satisfies \emph{simultaneously}
\begin{equation}
I(Z; S \mid C, \xi) \;\ge\; H(S \mid C, \xi) - L_{\mathrm{rec}},
\qquad
I(Z; C \mid S, \xi) \;\ge\; H(C \mid S, \xi) - L_{\mathrm{rec}},
\label{eq:retention_bounds}
\end{equation}
both bounds being strictly positive whenever $L_{\mathrm{rec}} < \min\{H(S \mid C,\xi),\, H(C \mid S,\xi)\}$.
\end{theorem}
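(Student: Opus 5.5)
The argument is a conditional Fano-type bound built from the chain $S \to G \to Z$ together with the reconstruction term. Write the conditional mutual information as
\[
I(Z; S \mid C,\xi) = H(S \mid C,\xi) - H(S \mid Z, C,\xi),
\]
so it suffices to show $H(S \mid Z, C, \xi) \le L_{\mathrm{rec}}$. The $C$-bound then follows by the same argument with the roles of $S$ and $C$ swapped.

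\textbf{Step 1: injectivity reduces $S$ to $G$.} By (A1), conditional on $(C,\xi)=(c,x)$ the map $s \mapsto g(c,s,x)$ is injective. Hence $S$ is a deterministic function of $(G, C, \xi)$, and $G$ is a deterministic function of $(S,C,\xi)$. So
\[
H(S \mid Z, C, \xi) = H(G \mid Z, C, \xi) \le H(G \mid Z).
\]
The equality uses the bijection between $S$ and $G$ given $(C,\xi)$. The inequality holds because conditioning reduces entropy; this is legitimate since $G$ is discrete by (A2), so these are Shannon entropies.

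\textbf{Step 2: cross-entropy dominates conditional entropy.} For any decoder $p_\psi(G\mid Z)$,
\[
L_{\mathrm{rec}} = \mathbb{E}[-\log p_\psi(G\mid Z)] = H(G\mid Z) + \mathbb{E}_Z\, \mathrm{KL}\bigl(p(G\mid Z)\,\|\,p_\psi(G\mid Z)\bigr) \ge H(G\mid Z).
\]
This is Gibbs' inequality applied pointwise in $Z$. Combining with Step 1 gives $H(S\mid Z,C,\xi) \le L_{\mathrm{rec}}$, hence $I(Z;S\mid C,\xi) \ge H(S\mid C,\xi) - L_{\mathrm{rec}}$.

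\textbf{Step 3: the symmetric bound and simultaneity.} The symmetric bound for $C$ uses injectivity of $g$ in $C$ with $(S,\xi)$ fixed. Both bounds hold at once because each uses only the same quantity $L_{\mathrm{rec}}$ and no trade-off between them is involved. Strict positivity is immediate once $L_{\mathrm{rec}}$ is below both entropies, and these are finite and positive by (A2).

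\textbf{Main obstacle.} I expect the delicate point to be Step 1 when $Z$ is stochastic, as in a VGAE, or continuous. Then $H(G\mid Z)$ must be read as the conditional entropy of a discrete variable given an arbitrary variable, defined through regular conditional distributions. We also need the Markov structure $(C,S,\xi)\to G\to Z$, so that the encoder's noise is independent of everything given $G$; this is what makes the entropy manipulations valid without densities for $Z$. I would state this Markov assumption explicitly and use the chain rule $H(S\mid Z,C,\xi)=H(S,G\mid Z,C,\xi)-H(G\mid S,Z,C,\xi)$ with the last term zero, which avoids any need for $Z$ to be discrete.
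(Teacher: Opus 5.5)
Your proposal is correct and matches the paper's proof step for step: Gibbs' inequality gives $L_{\mathrm{rec}} \ge H(G \mid Z)$, and injectivity makes $S$ a function of $(G, C, \xi)$, so that $H(S \mid Z, C, \xi) \le H(G \mid Z, C, \xi) \le H(G \mid Z)$. The chain rule for conditional mutual information then finishes; at the first of these steps you write an equality where the paper has an inequality, which is also valid because $G = g(C,S,\xi)$ is in turn a function of $(S, C, \xi)$. One small remark: none of these manipulations actually uses the Markov structure $(C,S,\xi) \to G \to Z$ that you flag as needed. Both ``conditioning reduces entropy'' and Gibbs' inequality hold for any joint law, so stating that assumption is harmless but not required.
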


The key distinction from discriminative training is that the reconstruction objective imposes \emph{per-factor} constraints whose strength scales with each factor's generative footprint on the observed graph, whereas a classification loss yields only a single aggregate constraint bounded by $H(Y)$, which is satisfiable by an encoder that discards $C$ entirely when $S$ alone predicts the training labels. Importantly, the VGAE does not itself learn a ``shortcut'': it retains $S$ as a generative factor, and the shortcut arises only downstream when a predictor exploits the spurious $S$--$Y$ correlation. \pcref{thm:retention} establishes information-theoretic retention; it does not guarantee linear accessibility or preservation after graph pooling. The full proof is deferred to \textcolor{paperaccent}{Appendix~\ref{app:retention_proof}}.

\begin{figure}[t]
\centering
\begin{minipage}[c]{0.40\linewidth}
\centering
\includegraphics[width=\linewidth]{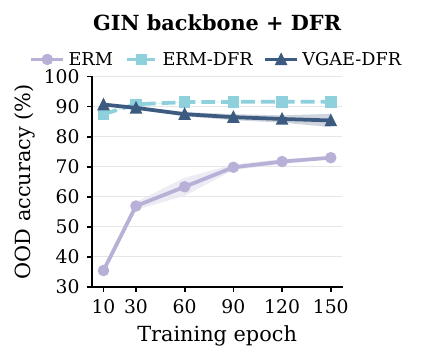}
\caption{DFR on frozen GIN representations across training, GOOD-Motif concept shift (OOD accuracy, mean$\pm$std over 3 seeds; $y$-axis starts at 30\%). ERM improves slowly, yet DFR on the same encoder is already above 87\% at epoch 10; DFR on the unsupervised VGAE starts at 90.7\% and degrades mildly with further training.}
\label{fig:dfr_verification}
\end{minipage}%
\hfill
\begin{minipage}[c]{0.57\linewidth}
\captionof{algorithm}{\ours Training Pipeline}
\label{alg:guide_conf}
\small
\begin{algorithmic}[1]
\Require graphs $\{G_i\}_{i=1}^N$, environment indices $\{e_i\}$, weights $\{w_i\}$
\State Pre-train VGAE on $\mathcal{L}_{\mathrm{VGAE}}$; save warm-start checkpoint
\State Initialize VGAE from warm-start checkpoint; initialize adversary $\rho$
\For{$t = 1, \dots, T$}
    \For{each mini-batch $\mathcal{B}$}
        \State Compute $\{\mathbf{h}_G^{(i)}\}_{i\in\mathcal{B}}$ with the VGAE encoder
        \State \textbf{Adversary step:} freeze encoder; $k_{\mathrm{adv}}$ updates of $\rho$ on $\mathrm{sg}(\mathbf{h}_G)$ with \cref{eq:adv_loss}
        \State \textbf{Encoder step:} freeze $\rho$; update encoder with \cref{eq:enc_total}
    \EndFor
\EndFor
\State \textbf{Evaluation:} freeze encoder; fit a linear probe on training labels
\end{algorithmic}
\end{minipage}
\end{figure}
\paragraph{Empirical verification via feature reweighting.}
\pcref{thm:retention} is information-theoretic; we test whether both factors are also \emph{linearly accessible} with Deep Feature Reweighting (DFR;~\citealt{kirichenko2023last}), which freezes the encoder and refits a sparse linear head on a group-balanced subset, so its OOD accuracy measures the invariant signal the latent exposes once the spurious $S$--$Y$ correlation is removed. We apply DFR to GIN checkpoints of a label-trained ERM encoder and of an unsupervised VGAE on GOOD-Motif concept shift (3 seeds; protocol in \textcolor{paperaccent}{Appendix~\ref{app:dfr_protocol}}). \pcref{fig:dfr_verification} shows two things. First, ERM climbs slowly from $35.4\%$ to $73.0\%$, yet DFR on the \emph{same} frozen encoder is already at $87.5\%$ at epoch $10$: $C$ is linearly accessible early, and the ERM head merely underweights it relative to $S$. Second, DFR on the VGAE, which never sees a label, reaches $90.7\%$ at epoch $10$, on par with the label-trained encoder: reconstruction exposes $C$ in linear form from the outset, and, since the same latent is what a naive probe exploits via $S$, it exposes $S$ as well. With longer VGAE training, DFR accuracy drifts down to $85.4\%$ with growing variance, suggesting that prolonged reconstruction slowly reallocates latent capacity away from the directions a linear head needs. Since DFR probes the pooled $\mathbf{h}_G$ rather than the latent the decoder sees, retention evidently survives pooling as well.

\subsection{Adversarial Environment Confusion}
\label{sec:adv_training}

Since the VGAE latent retains environment-specific features alongside invariant ones (\pcref{sec:vgae_features}), \ours adds an encoder--adversary alternating optimization that discourages reliance on the environment-discriminative directions.

\paragraph{Adversary.} The adversarial classifier $\rho: \mathbb{R}^d \to \mathbb{R}^K$ is a two-layer MLP that maps graph representations to environment logits. In each mini-batch, the encoder is frozen and the adversary is updated for $k_{\mathrm{adv}}$ steps on the \emph{detached} representations $\mathbf{h}_{\mathrm{adv}} = \mathrm{sg}(\mathbf{h}_G)$, minimizing the rebalanced cross-entropy:
\begin{equation}
\mathcal{L}_{\mathrm{adv}} = \frac{\sum_{i} w_i \cdot \mathrm{CE}(\rho(\mathbf{h}_{\mathrm{adv}}^{(i)}),\; e_i)}{\sum_{i} w_i},
\label{eq:adv_loss}
\end{equation}
where $e_i$ are the environment indices of \pcref{sec:problem_setting} and $w_i \propto n_{e_i}^{-\alpha}$ are rebalancing weights that compensate for uneven environment sizes, with $n_{e}$ the number of training graphs in environment $e$ and $\alpha \ge 0$ controlling the strength of rebalancing ($\alpha=0$ gives uniform weights). Rebalancing matters most when the partition is inferred, since clusters are rarely of equal size.

\paragraph{Encoder update via uniform confusion.} The adversary is then frozen, and the encoder is updated. The confusion loss drives the adversary's output toward the uniform distribution $\mathbf{u} = (1/K, \dots, 1/K)$:
\begin{equation}
\mathcal{L}_{\mathrm{conf}} = \frac{\sum_{i} w_i \cdot \left(-\sum_{k=1}^{K} \frac{1}{K} \log \frac{\exp(\rho_k(\mathbf{h}_G^{(i)}))}{\sum_{j}\exp(\rho_j(\mathbf{h}_G^{(i)}))}\right)}{\sum_{i} w_i}.
\label{eq:conf_loss}
\end{equation}
For each sample, this equals $H(U_K, p_\rho) = \log K + \mathrm{KL}(U_K \| p_\rho)$, minimized when the adversary outputs $1/K$ for every environment. The alternating game works as follows: the adversary identifies representation directions that separate environments; the encoder receives gradients that weaken those distinctions; and the reconstruction objective constrains the encoder from discarding all variation.

\paragraph{Encoder total loss.} The encoder loss combines reconstruction and confusion:
\begin{equation}
\mathcal{L}_{\mathrm{enc}} = \mathcal{L}_{\mathrm{VGAE}} + \beta \cdot \mathcal{L}_{\mathrm{conf}},
\label{eq:enc_total}
\end{equation}
where $\beta$ controls the strength of the confusion penalty: $\mathcal{L}_{\mathrm{VGAE}}$ keeps the latent informative (\pcref{sec:vgae_features}) while $\mathcal{L}_{\mathrm{conf}}$ contracts the directions the adversary identifies.

\subsection{Theoretical Analysis: Why Confusion Helps OOD Generalization}
\label{sec:theory}

Erasing environment information discards useful signal together with the shortcut~\citep{belrose2023leace,lin2024spurious}. We show instead that confusion training \emph{contracts} environment-discriminative directions (\pcref{lem:contraction}), and that under a regularized downstream classifier this geometric change is exactly a directional penalty that makes spurious features expensive to use (\pcref{prop:ridge}). Proofs are in \textcolor{paperaccent}{Appendix~\ref{app:proofs}}.

\begin{lemma}[Local contraction]
\label[lemma]{lem:contraction}
Fix a linear $K$-class softmax adversary with logits $z(h)=Wh+b$, probabilities $p(h)=\operatorname{softmax}(z(h))$, and uniform target $q=K^{-1}\mathbf{1}$. Define the centered adversary weights $W_c=CW$ where $C=I-K^{-1}\mathbf{1}\mathbf{1}^\top$. Then a gradient step $h^+ = h - \alpha\beta\,\nabla_h L_{\mathrm{conf}}(h)$ contracts each adversary-discriminative direction $v_j$ (right singular vector of $W_c$ with singular value $s_j$) by a factor $1 - \alpha\beta s_j^2 / K$ to first order, while directions orthogonal to $\operatorname{range}(W_c^\top)$ are unchanged. In particular, when one singular value dominates ($s_1 \gg s_2$), the cumulative effect of confusion training is well approximated by a rank-one contraction along $u_E = v_1$:
\begin{equation}
\mathbf{h}_G^{(\beta)} \approx T_\varepsilon\, \mathbf{h}_G^{(0)}, \qquad T_\varepsilon = I - (1-\varepsilon)\, u_E u_E^\top, \quad 0 < \varepsilon \le 1,
\label{eq:contraction_model}
\end{equation}
where $\varepsilon = 1$ corresponds to no contraction and smaller $\varepsilon$ means the environment direction $u_E$ receives less relative scale.
\end{lemma}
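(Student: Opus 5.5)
The plan is to compute the gradient of the per-sample confusion loss explicitly, and then linearize it around the point where the adversary is confused. For one sample, $L_{\mathrm{conf}}(h) = -\sum_k q_k \log p_k(h)$. The softmax cross-entropy identity gives $\nabla_z L_{\mathrm{conf}} = p(h) - q$, and the chain rule then gives $\nabla_h L_{\mathrm{conf}} = W^\top (p(h)-q)$. Because both $p$ and $q$ sum to one, $p-q$ lies in $\mathbf{1}^\perp$, so $C(p-q)=p-q$. The gradient can therefore be rewritten as $W^\top C (p-q) = W_c^\top (p-q)$. Two consequences follow at once. First, the update $h^+-h = -\alpha\beta\,W_c^\top(p-q)$ lies in $\operatorname{range}(W_c^\top)$. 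Second, any component of $h$ orthogonal to that range is left exactly unchanged, not merely to first order, which settles the second claim.

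For the contraction factor, I would expand $p(h)$ to first order around a reference point $h_0$ at which the logits are constant across classes, i.e.\ $W_c h_0 + Cb = 0$, so that $p(h_0)=q$. This is the regime in which confusion training operates. The softmax Jacobian at uniform probabilities is $\operatorname{diag}(q) - qq^\top = K^{-1}C$. Hence
\[
p(h) - q \approx K^{-1} C\,(W(h-h_0)) = K^{-1} W_c (h-h_0).
\]
Substituting into the gradient gives
\[
h^+ - h_0 \approx \bigl(I - \tfrac{\alpha\beta}{K} W_c^\top W_c\bigr)(h-h_0).
\]
Write the SVD $W_c^\top W_c = \sum_j s_j^2 v_jv_j^\top$. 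Along each $v_j$ the coordinate is then multiplied by $1-\alpha\beta s_j^2/K$, and every direction in the null space of $W_c$ gets factor one. Without loss of generality I would choose coordinates with $h_0=0$, or note that the statement concerns the centered representation. The main subtlety is justifying this linearization point. I would state explicitly that the first-order claim holds in a neighbourhood of the confused manifold $\{h: W_c h + Cb = 0\}$, and that the Jacobian is evaluated there. Elsewhere the Jacobian is $\operatorname{diag}(p)-pp^\top$, and the same argument goes through with $s_j$ replaced by the singular values of $(\operatorname{diag}(p)-pp^\top)^{1/2}W$.

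For the rank-one approximation, I would iterate the linear map over $t$ steps. This gives $\prod_t(I-\tfrac{\alpha\beta}{K}W_c^\top W_c) = \sum_j (1-\alpha\beta s_j^2/K)^t v_jv_j^\top + P_\perp$, treating the adversary as approximately fixed, or as having a stable dominant direction, between updates. When $s_1\gg s_2$, the factors for $j\ge 2$ stay close to one while the $v_1$ factor decays. Setting $\varepsilon = (1-\alpha\beta s_1^2/K)^t \in (0,1]$, which requires the step condition $\alpha\beta s_1^2/K < 1$, yields $T_\varepsilon = I-(1-\varepsilon)u_Eu_E^\top$ with $u_E=v_1$.

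The error from the discarded modes is at most $1-(1-\alpha\beta s_2^2/K)^t$, which is small relative to $1-\varepsilon$. This is the sense of ``well approximated.'' The genuinely heuristic part is the cumulative claim, since the adversary is retrained between encoder steps. I would flag that the approximation assumes a slowly varying dominant singular direction, and not attempt more than that.
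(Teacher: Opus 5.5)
Your proposal is correct and follows the paper's proof essentially step for step. Both arguments rewrite the gradient $W^\top(p-q)$ as $W_c^\top(p-q)$ via centering, linearize at an equal-logit point using the softmax Jacobian $K^{-1}C$, and read off the factors $1-\alpha\beta s_j^2/K$ from the SVD of $W_c^\top W_c$. Both then iterate a fixed map to get $\varepsilon=(1-\alpha\beta s_1^2/K)^t$, under the same heuristics you flag (a stable adversary, and an update acting directly on $h$).

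The only slip is in your aside about other expansion points, which the lemma does not need. There $W^\top(p(h_0)-q)\neq 0$, so the linearized step carries a drift term. Only the Jacobian of the update map is scaled, by $1-\alpha\beta\tilde s_j^2$, where $\tilde s_j$ are the singular values of $(\operatorname{diag}(p)-pp^\top)^{1/2}W$, with no extra $1/K$.
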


\key{Uniform confusion is thus a \emph{targeted} contraction:} only the directions the adversary uses to separate environments are compressed, more strongly the larger their singular value $s_j$, while directions the adversary ignores, which may carry stable task-relevant information, are left intact. This selectivity is what distinguishes confusion from naive information removal: it narrows the environment-discriminative subspace without collateral damage to invariant features. The rank-one model~\eqref{eq:contraction_model} is a first-order consequence of a dominant singular value, and it ties $\varepsilon$ to the training hyperparameters: after $T$ encoder steps, $\varepsilon \approx \prod_{t=1}^{T}(1-\alpha\beta s_1^2/K)$, so a larger confusion weight $\beta$ or longer training yields a stronger contraction. A neural encoder may additionally rotate or nonlinearly deform the representation, so we verify the model empirically rather than assume it globally.

\begin{proposition}[Ridge identity]
\label[proposition]{prop:ridge}
For any loss $\ell$, any $\lambda > 0$, and any $\varepsilon \in (0,1]$, the $\ell_2$-regularized linear probe on the contracted representation is equivalent to an anisotropic ridge penalty on the original representation:
\begin{equation}
\min_{w,b}\; \mathbb{E}\!\left[ \ell(Y, w^\top T_\varepsilon h + b) \right] + \lambda \|w\|_2^2
\;=\;
\min_{v,b}\; \mathbb{E}\!\left[ \ell(Y, v^\top h + b) \right] + \lambda \!\left[ \|v\|_2^2 + (\varepsilon^{-2}\!-\!1)(v^\top u_E)^2 \right],
\label{eq:ridge_identity}
\end{equation}
where optima correspond under $v = T_\varepsilon w$. The effective penalty on $\operatorname{span}(u_E)$ is $\lambda / \varepsilon^2$, while orthogonal directions retain penalty $\lambda$.
\end{proposition}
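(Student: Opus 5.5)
The plan is a change of variables. Since $T_\varepsilon$ is symmetric, we have $w^\top T_\varepsilon h = (T_\varepsilon w)^\top h$. So setting $v = T_\varepsilon w$ turns the data-fit term on the left of \eqref{eq:ridge_identity} into the data-fit term on the right, with the intercept $b$ carried over unchanged. The first step is to check that this substitution is a bijection of $\mathbb{R}^d$. The eigenvalues of $T_\varepsilon = I-(1-\varepsilon)u_Eu_E^\top$ are $\varepsilon$ on $\operatorname{span}(u_E)$ and $1$ on its orthogonal complement. Because $\varepsilon\in(0,1]$ and $\|u_E\|_2=1$ (as a singular vector from \cref{lem:contraction}), $T_\varepsilon$ is invertible, with
\[
T_\varepsilon^{-1} = I + (\varepsilon^{-1}-1)\,u_Eu_E^\top .
\]
Hence minimizing over $(w,b)$ is the same as minimizing over $(v,b)$ with $w = T_\varepsilon^{-1}v$.

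The second step is to rewrite the penalty in terms of $v$. Decompose $v = (v^\top u_E)u_E + v_\perp$. Then $w = T_\varepsilon^{-1}v = \varepsilon^{-1}(v^\top u_E)u_E + v_\perp$, and by Pythagoras
\[
\|w\|_2^2 = \varepsilon^{-2}(v^\top u_E)^2 + \|v_\perp\|_2^2 = \|v\|_2^2 + (\varepsilon^{-2}-1)(v^\top u_E)^2 .
\]
This is exactly the bracket on the right of \eqref{eq:ridge_identity}. The two objectives therefore agree pointwise under the bijection $(w,b)\mapsto(T_\varepsilon w,b)$. It follows that their infima are equal and that minimizers correspond under $v=T_\varepsilon w$. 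Nothing about $\ell$ is used, so the identity holds for any loss and any $\lambda>0$, whether or not a minimizer exists; if one exists on one side, its image is a minimizer on the other.

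The final claim is read off from the same decomposition. The penalty along $u_E$ is $\lambda\bigl[1+(\varepsilon^{-2}-1)\bigr](v^\top u_E)^2 = (\lambda/\varepsilon^2)(v^\top u_E)^2$, while for $v\perp u_E$ it is $\lambda\|v\|_2^2$.

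No step is a real obstacle. The only points needing care are the symmetry of $T_\varepsilon$, which is what lets it move onto $w$; the unit normalization of $u_E$; and the requirement $\varepsilon>0$, which guarantees invertibility so that the substitution does not lose or add feasible probes.
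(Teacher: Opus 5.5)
Your proof is correct and follows the paper's argument essentially step for step. Both use the invertibility of $T_\varepsilon$ with explicit inverse $I+(\varepsilon^{-1}-1)u_Eu_E^\top$, the change of variables $v=T_\varepsilon w$ via symmetry, and a rewriting of $\|w\|_2^2$ in terms of $v$; the only cosmetic difference is that you evaluate $\|T_\varepsilon^{-1}v\|_2^2$ by orthogonal decomposition and Pythagoras, whereas the paper squares the inverse to get $T_\varepsilon^{-2}=I+(\varepsilon^{-2}-1)u_Eu_E^\top$ and reads off its eigenvalues.
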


\begin{corebox}
\key{Contracting $u_E$ in the representation is \emph{exactly} equivalent to increasing the downstream ridge penalty in that direction from $\lambda$ to $\lambda/\varepsilon^2$, while leaving other directions unaffected.}
\end{corebox}
 The added term $(\varepsilon^{-2}-1)(v^\top u_E)^2$ charges only the component of $v$ along $u_E$ and leaves any $v \perp u_E$ untouched. This is \textbf{feature reweighting, not feature erasure}: the probe can still use the environment-associated direction, but doing so becomes progressively more expensive as $\varepsilon$ decreases, and the information along $u_E$ is never removed. \pcref{prop:ridge} thus performs the reweighting of DFR~\citep{kirichenko2023last} in representation space, as an anisotropic ridge penalty, and therefore needs neither class labels nor group annotations. \key{Regularization is essential:} $T_\varepsilon$ is invertible for $\varepsilon>0$, so an unregularized classifier simply rescales its weight along $u_E$ by $1/\varepsilon$ and contraction has no downstream effect.

\paragraph{When does contraction help?} A two-dimensional Gaussian model (\pcref{prop:worked_example}, \textcolor{paperaccent}{Appendix~\ref{app:worked_example}}) makes the answer exact: when the environment feature is spuriously correlated with $Y$ in training but not at test time, the OOD risk is \emph{strictly decreasing} in the contraction amount; as $\lambda \to 0^+$ the probe undoes the contraction and the effect vanishes; and when $u_E = u_Y$ contraction hurts because the compressed direction carries the stable signal. \key{The mechanism therefore requires exactly two things:} \pointmark{172}~a spurious training correlation along a direction separated from the core signal, and \pointmark{173}~a finitely regularized probe.

%% file: Sections/experiments.tex
\section{Experiments}
\label{sec:experiments}

\subsection{Synthetic Evaluation}
\label{sec:synthetic}

We isolate the mechanism of \pcref{sec:theory} by directly contracting an environment-associated direction in a synthetic representation and checking downstream behavior against the theory.

\begin{figure}[t]
\centering
\includegraphics[width=0.76\linewidth]{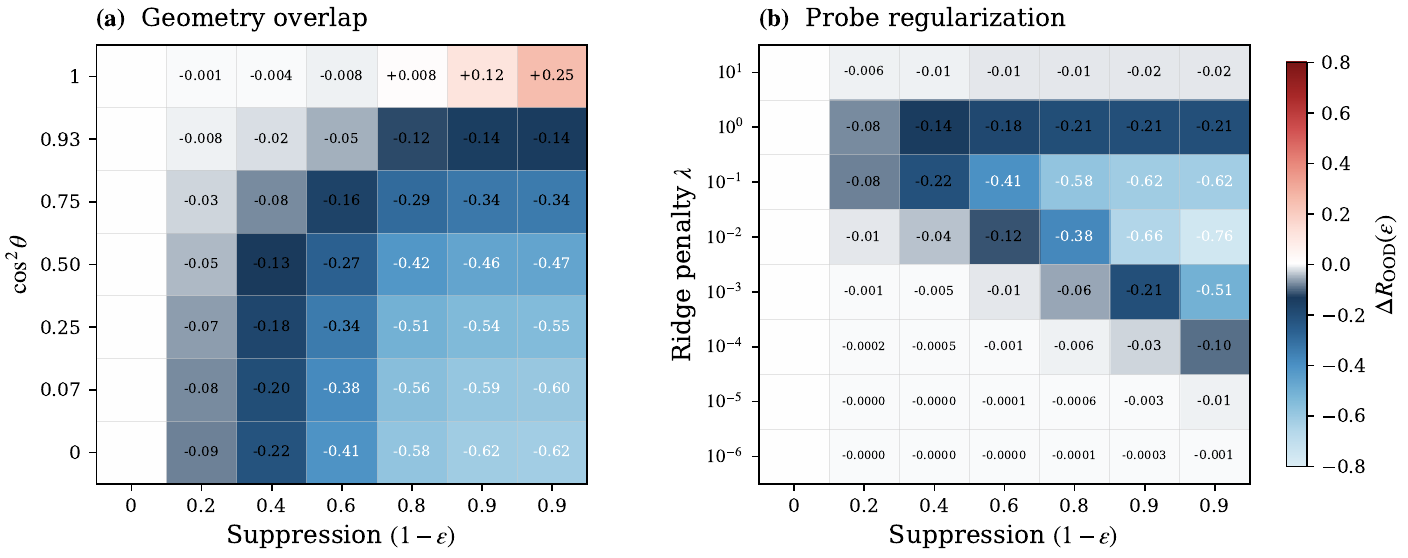}
\caption{\textbf{Controlled contraction of an environment-associated direction.} Cells report $\Delta R_{\mathrm{OOD}}$ (10 seeds; negative = improvement). \textbf{(a)}~Contraction helps when environment and task directions are separated but hurts at full alignment. \textbf{(b)}~The effect vanishes as $\lambda \to 0$ and peaks at intermediate $\lambda$, consistent with the induced penalty $\lambda/\varepsilon^2$.}
\label{fig:synthetic_heatmaps}
\end{figure}

\paragraph{Setup.}
We sample $z_c, z_e \sim \mathcal{N}(0,1)$, $\eta \sim \mathcal{N}(0, \sigma_\eta^2)$, and set $Y = z_c$. The representation is $\mbf{h} = (z_c + \eta)\mbf{u}_Y + \sigma_s(r z_c + \sqrt{1-r^2}\, z_e)\mbf{u}_E$, where $\mbf{u}_E$ and $\mbf{u}_Y$ have squared overlap $\cos^2\theta$. Training uses $r_{\mathrm{tr}} = 0.9$; testing uses $r_{\mathrm{te}} = 0$. We apply $T_\varepsilon = I - (1-\varepsilon)\mbf{u}_E\mbf{u}_E^\top$ directly and fit a ridge probe on $10\text{k}$ training examples, evaluating OOD MSE on $50\text{k}$ test examples. We report $\Delta R_{\mathrm{OOD}} = R_{\mathrm{OOD}}(\varepsilon) - R_{\mathrm{OOD}}(1)$.

\paragraph{\obs{Finding 1:} Geometry governs the cost of contraction.}
We fix $\lambda = 0.1$ and sweep $\cos^2\theta$. When the two directions are orthogonal, contraction reduces OOD error; the benefit shrinks with increasing overlap; and at full alignment contraction \emph{increases} error because it suppresses stable signal alongside the shortcut (\pcref{fig:synthetic_heatmaps}a), matching the trade-off in \pcref{prop:worked_example}(iii).

\paragraph{\obs{Finding 2:} Finite regularization is necessary.}
At near-zero regularization ($\lambda = 10^{-6}$), contraction has negligible effect: the probe rescales its coefficient to undo the contraction, as \pcref{prop:worked_example}(ii) predicts. The benefit peaks at intermediate $\lambda$ and diminishes under excessive regularization, which suppresses all directions uniformly (\pcref{fig:synthetic_heatmaps}b). \key{Contraction acts through the regularization geometry, not by erasing information.}

\subsection{Controlled Graph Benchmark}
\label{sec:goodmotif}

The synthetic study applies $T_\varepsilon$ to a fixed representation; we now test whether the mechanism survives end-to-end training on realistic graphs.

\paragraph{Setup.}
We use GOOD-Motif concept shift~\citep{gui2022good}, where the base-graph distribution shifts across domains while the motif--label relationship is stable. The benchmark provides a domain index for every training graph, which we use directly as the environment $e$ ($K$ equals the number of training domains); class labels are never seen by the encoder. The encoder is a 3-layer GCN with a 64-dimensional latent space, trained with the VGAE reconstruction loss and fine-tuned with adversarial confusion. We freeze the encoder and fit an $\ell_2$-regularized linear classifier on the training labels, sweeping the inverse regularization strength over $C \in \{10^{-3}, 10^{-2}, \dots, 10^{3}\}$. To isolate the effect of representation geometry from trivial scale differences, we fit a per-dimension standardizer on the $\beta{=}0$ training representations and apply it, frozen, to all encoder variants before probing. We report OOD test accuracy averaged over 3 seeds.

\paragraph{Results.}
\pcref{tab:goodmotif} reports OOD test accuracy. \key{\ours outperforms GraphCL, MVGRL, and OOD-GCL, and improves over the plain VGAE warm-start.} RGCL performs on par with \ours on this setting.

\paragraph{Ablation: probe regularization.}
\pcref{fig:reg_sweep} sweeps $C$ over six orders of magnitude. Accuracy for both $\beta{=}0$ and $\beta{=}0.1$ is low when the probe is too constrained to use any feature, peaks near $C{=}10$, and drops once weak regularization lets the probe exploit shortcut features. The gain $\Delta$ from confusion is non-monotonic: it is largest ($+3.8\%$) at $C{=}10^{-3}$, where a large base penalty $\lambda$ is inflated to $\lambda/\varepsilon^2$ on the environment direction (\pcref{prop:ridge}); it vanishes at $C{=}1$, where the probe has enough capacity to partially undo the contraction (\pcref{prop:worked_example}(ii)); and it returns at $C \ge 10^2$ ($+1.4$, $+2.4\%$), where the baseline probe freely exploits shortcuts that the confusion-trained encoder has already contracted.

\begin{figure}[t]
\centering
\begin{minipage}[c]{0.46\linewidth}
\centering
\captionof{table}{\textbf{GOOD-Motif concept shift}, OOD test accuracy (\%). \textbf{Bold}: best; \underline{underline}: second best.}
\label{tab:goodmotif}
\small
\begin{tabular}{@{}lc@{}}
\toprule
\rowcolor{tabhead} Method & Acc. \\
\midrule
ERM (sup.) & 53.0 {\scriptsize$\pm$8.5} \\
GraphCL & 60.9 {\scriptsize$\pm$3.4} \\
MVGRL & 54.5 {\scriptsize$\pm$4.7} \\
OOD-GCL & 64.1 {\scriptsize$\pm$3.6} \\
RGCL & \textbf{70.1} {\scriptsize$\pm$6.5} \\
\midrule
VGAE ($\beta{=}0$) & 65.9 {\scriptsize$\pm$2.0} \\
\rowcolor{tabours} \ours & \underline{69.6} {\scriptsize$\pm$2.2} \\
\bottomrule
\end{tabular}
\end{minipage}%
\hfill
\begin{minipage}[c]{0.36\linewidth}
\centering
\includegraphics[width=\linewidth]{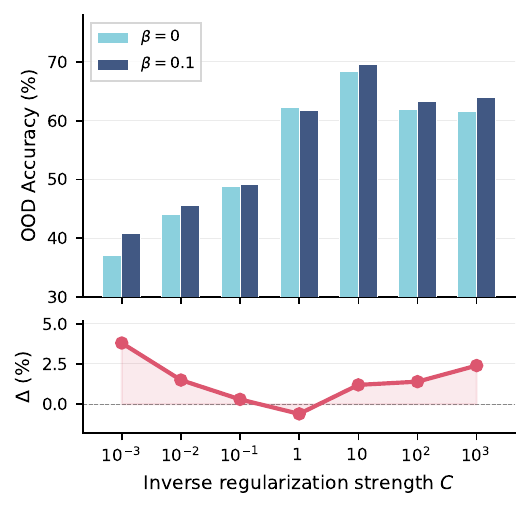}
\caption{\textbf{Probe regularization sweep on GOOD-Motif.} \textit{Top}: OOD accuracy for $\beta{=}0$ vs.\ $\beta{=}0.1$. \textit{Bottom}: gain $\Delta$ from confusion.}
\label{fig:reg_sweep}
\end{minipage}
\end{figure}

\input{tables/realworld_drugood_results}
\subsection{Real-World Molecular Evaluation}
\label{sec:real_world}

\paragraph{Setup.}
We use EC50 and IC50 from DrugOOD~\citep{ji2022drugood} under assay, scaffold, and size splits. Each encoder is frozen and a logistic-regression probe is fitted on standardized representations using 5\%, 10\%, or 20\% of the labeled training split; $(\alpha,\beta,K)$ and checkpoints are selected on validation and the test split is used once. Baselines are GraphCL, RGCL, OOD-GCL, MVGRL, and VGAE ($\beta{=}0$), i.e.\ \ours without confusion. Details are in \textcolor{paperaccent}{Appendix~\ref{app:realworld_details}}.

\paragraph{Environment partition.}
Unlike GOOD-Motif, DrugOOD offers no usable environment index inside the training split: the assay, scaffold, and size attributes define the train/validation/test split, but within training they are respectively thousands of sparsely populated assay IDs, near-unique scaffolds, and a continuous graph size, none of which yields a partition of the kind \pcref{sec:problem_setting} assumes. We therefore construct \emph{pseudo-environments} without labels, following~\citet{yao2024empowering}: a frozen MVGRL encoder embeds the training graphs, $k$-means with $K$ clusters is run on the embeddings, and the cluster index serves as $e_i$. Cluster imbalance is absorbed by the rebalancing exponent $\alpha$ of \pcref{sec:adv_training}. No annotation of any kind is used during representation learning, and $K$ is treated as a hyperparameter selected on validation.

\paragraph{Results.}
\pcref{tab:drugood_main} reports test ROC-AUC. \key{\ours is best in 17 of the 18 dataset--ratio cells,} the sole exception being IC50-size at 5\% labels, and the margin is largest on the scaffold splits, where test molecules differ structurally from training ones. The VGAE ($\beta{=}0$) row is the ablation without confusion: it is the weakest representation on most splits, trailing \ours by 11--15 points on EC50-scaffold and by 3--10 points elsewhere, \key{so the gain is attributable to confusion training rather than to the reconstruction backbone.} MVGRL, whose embedding only supplies the pseudo-environment partition, is likewise well below \ours, so a rough, label-free partition is enough for confusion to act on. Averaged over the six splits, \ours leads the strongest baseline by $4.1$, $3.7$, and $2.7$ points at 5\%, 10\%, and 20\% labels, so the gain holds at every label budget.

\paragraph{Sensitivity.}
Across the top-ranked configurations (\textcolor{paperaccent}{Appendix~\ref{app:realworld_hparams}}), the strongest confusion weight $\beta=10$ is preferred in nearly every setting, whereas the best $K$ varies across splits and is selected on validation. The preference for large $\beta$ matches \pcref{prop:ridge}: a probe fitted on 5--20\% of the labels sits in the strongly regularized regime where \pcref{fig:synthetic_heatmaps}(b) shows contraction to help most. Component-loss curves (\textcolor{paperaccent}{Appendix~\ref{app:realworld_losses}}) confirm that the alternating optimization is stable.

%% file: tables/realworld_drugood_results.tex
\begin{table*}[t]
\centering
\caption{\textbf{DrugOOD under limited labels.} Test ROC-AUC (\%), mean $\pm$ std over three seeds (IC50-size VGAE at 20\%: two), for linear probes on frozen representations. \textbf{Bold}: best per column; Avg.: mean over the six splits; shaded row: \ours.}
\label{tab:drugood_main}
\setlength{\tabcolsep}{4.2pt}
\renewcommand{\arraystretch}{0.92}
\scalebox{0.86}{%
\begin{tabular}{lccccccc}
\toprule
\rowcolor{tabhead} Method & \multicolumn{3}{c}{EC50} & \multicolumn{3}{c}{IC50} & \\
\cmidrule(lr){2-4}\cmidrule(lr){5-7}
\rowcolor{tabhead} & Assay & Scaffold & Size & Assay & Scaffold & Size & Avg. \\
\midrule
\multicolumn{8}{l}{\textcolor{paperobs}{\textit{5\% labels}}} \\
MVGRL & 68.9$\pm$3.7 & 52.1$\pm$3.7 & 57.8$\pm$2.9 & 67.1$\pm$0.3 & 57.4$\pm$1.7 & \textbf{62.1$\pm$0.8} & 60.9 \\
VGAE ($\beta{=}0$) & 58.2$\pm$2.8 & 50.0$\pm$6.2 & 52.9$\pm$4.1 & 61.5$\pm$2.2 & 55.7$\pm$2.2 & 56.8$\pm$0.7 & 55.9 \\
GraphCL & 65.1$\pm$2.0 & 54.6$\pm$9.0 & 56.3$\pm$1.8 & 67.0$\pm$1.0 & 61.8$\pm$1.9 & 61.5$\pm$1.1 & 61.1 \\
RGCL & 69.5$\pm$3.1 & 54.8$\pm$7.5 & 56.5$\pm$2.9 & 65.8$\pm$0.1 & 60.0$\pm$1.6 & 60.5$\pm$1.1 & 61.2 \\
OOD-GCL & 59.5$\pm$6.5 & 57.3$\pm$6.2 & 53.4$\pm$1.1 & 58.1$\pm$2.2 & 57.3$\pm$4.0 & 57.1$\pm$2.9 & 57.1 \\
\rowcolor{tabours} \ours & \textbf{70.6$\pm$1.0} & \textbf{65.0$\pm$1.0} & \textbf{61.0$\pm$1.8} & \textbf{69.5$\pm$0.5} & \textbf{64.3$\pm$1.2} & 61.6$\pm$1.0 & \textbf{65.3} \\
\midrule
\multicolumn{8}{l}{\textcolor{paperobs}{\textit{10\% labels}}} \\
MVGRL & 67.5$\pm$2.5 & 54.4$\pm$3.5 & 56.0$\pm$1.8 & 67.8$\pm$0.2 & 59.8$\pm$1.0 & 61.7$\pm$1.1 & 61.2 \\
VGAE ($\beta{=}0$) & 59.6$\pm$2.0 & 49.9$\pm$1.9 & 53.2$\pm$1.9 & 62.1$\pm$1.0 & 58.4$\pm$1.6 & 58.6$\pm$0.3 & 57.0 \\
GraphCL & 66.9$\pm$1.7 & 58.8$\pm$2.5 & 58.4$\pm$0.7 & 67.3$\pm$0.7 & 62.8$\pm$0.8 & 61.5$\pm$0.4 & 62.6 \\
RGCL & 69.8$\pm$1.5 & 58.8$\pm$2.3 & 57.7$\pm$2.6 & 66.9$\pm$1.3 & 60.8$\pm$1.7 & 61.3$\pm$0.6 & 62.6 \\
OOD-GCL & 63.9$\pm$4.7 & 57.2$\pm$3.1 & 53.2$\pm$1.0 & 59.0$\pm$1.6 & 58.8$\pm$4.0 & 58.3$\pm$2.9 & 58.4 \\
\rowcolor{tabours} \ours & \textbf{71.8$\pm$1.7} & \textbf{64.9$\pm$1.2} & \textbf{62.9$\pm$1.5} & \textbf{69.5$\pm$0.1} & \textbf{66.0$\pm$0.7} & \textbf{62.6$\pm$0.9} & \textbf{66.3} \\
\midrule
\multicolumn{8}{l}{\textcolor{paperobs}{\textit{20\% labels}}} \\
MVGRL & 67.8$\pm$1.6 & 56.3$\pm$1.2 & 57.1$\pm$2.4 & 67.8$\pm$0.5 & 61.4$\pm$0.8 & 61.6$\pm$0.3 & 62.0 \\
VGAE ($\beta{=}0$) & 62.5$\pm$4.6 & 53.1$\pm$0.6 & 55.6$\pm$1.4 & 63.9$\pm$0.4 & 60.4$\pm$0.7 & 60.2$\pm$0.0 & 59.3 \\
GraphCL & 70.7$\pm$3.7 & 59.1$\pm$3.7 & 60.2$\pm$0.4 & 67.5$\pm$0.3 & 63.5$\pm$0.7 & 62.3$\pm$0.7 & 63.9 \\
RGCL & 73.0$\pm$6.2 & 55.0$\pm$4.3 & 61.0$\pm$0.6 & 67.2$\pm$0.4 & 61.3$\pm$1.1 & 62.3$\pm$1.0 & 63.3 \\
OOD-GCL & 64.3$\pm$4.0 & 58.7$\pm$4.3 & 53.9$\pm$1.0 & 60.1$\pm$1.9 & 58.7$\pm$3.6 & 59.0$\pm$3.1 & 59.1 \\
\rowcolor{tabours} \ours & \textbf{73.6$\pm$1.3} & \textbf{64.4$\pm$1.7} & \textbf{62.8$\pm$0.5} & \textbf{69.7$\pm$0.1} & \textbf{65.9$\pm$0.4} & \textbf{63.2$\pm$0.5} & \textbf{66.6} \\
\bottomrule
\end{tabular}
}
\end{table*}

%% file: Sections/conclusion.tex
\section{Conclusion}
\label{sec:conclusion}

We introduced \ours, a label-free framework for graph OOD generalization that pairs a VGAE with adversarial confusion against the training environments, given or, as on DrugOOD, inferred by clustering. Both components come with a guarantee: reconstruction forces the latent to retain core and environment factors alike, and confusion contracts, rather than erases, the environment-discriminative directions, which for any $\ell_2$-regularized probe is exactly an anisotropic ridge penalty. Experiments on a controlled synthetic model, GOOD-Motif, and DrugOOD confirm both the mechanism and its preconditions. The recipe, i.e., \key{retain by reconstruction, then reweight by contraction}, is not specific to graphs and extends naturally to other likelihood-based encoders.

\section*{Ethics Statement}
This work studies unsupervised representation learning for graph out-of-distribution generalization. All experiments use publicly available benchmarks that contain no personal data or human subjects. By removing the need for class labels during representation learning and tolerating coarse or inferred environment partitions, \ours lowers the cost of building predictors that remain reliable under distribution shift, which is beneficial in settings such as molecular property prediction where labeled and annotated data are scarce. The method does not create new data-collection or surveillance capabilities, and we do not foresee any direct negative societal impact beyond those generic to representation learning.

%% file: Sections/appendix.tex
\section{Proofs}
\label{app:proofs}

\subsection{Proof of Theorem~\ref{thm:retention} (Reconstruction Forces Retention)}
\label{app:retention_proof}

Throughout, $G = g(C, S, \xi)$ is a discrete random variable, $Z = f_\theta(G)$ is the (possibly stochastic) encoder latent, so that $(C, S, \xi) \to G \to Z$ forms a Markov chain, and $p_\psi(\cdot \mid Z)$ is the decoder distribution over graphs. All entropies are in nats.

\textbf{Step 1: Reconstruction loss upper-bounds the conditional entropy of $G$.}
For every value $z$ of $Z$, the Gibbs inequality gives $\mathbb{E}\!\left[-\log p_\psi(G \mid z) \,\middle|\, Z = z\right] \ge H(G \mid Z = z)$, with equality iff $p_\psi(\cdot \mid z)$ equals the true posterior $p(\cdot \mid z)$. Taking expectation over $Z$,
\begin{equation}
L_{\mathrm{rec}} = \mathbb{E}\!\left[-\log p_\psi(G \mid Z)\right] \;\ge\; H(G \mid Z).
\label{eq:rec_ge_condent}
\end{equation}
This is the only place the decoder enters: any encoder--decoder pair achieving reconstruction loss $L_{\mathrm{rec}}$ leaves at most $L_{\mathrm{rec}}$ nats of $G$ unexplained by $Z$.

\textbf{Step 2: Injectivity turns $S$ into a function of $(G, C, \xi)$.}
By (A1), for each fixed $(c, \xi')$ the map $s \mapsto g(c, s, \xi')$ is injective, so there exists a measurable $\phi_S$ with $S = \phi_S(G, C, \xi)$ almost surely. Symmetrically, injectivity in the first argument yields $\phi_C$ with $C = \phi_C(G, S, \xi)$.

\textbf{Step 3: Chain of conditioning inequalities.}
Since $S$ is a deterministic function of $(G, C, \xi)$, conditioning on $(Z, C, \xi)$ and using that the entropy of a function is at most the entropy of its argument,
\begin{equation}
H(S \mid Z, C, \xi) \;=\; H\!\left(\phi_S(G, C, \xi) \mid Z, C, \xi\right) \;\le\; H(G \mid Z, C, \xi) \;\le\; H(G \mid Z),
\label{eq:cond_chain}
\end{equation}
where the last step is ``conditioning reduces entropy.'' Combining with~\eqref{eq:rec_ge_condent}, $H(S \mid Z, C, \xi) \le L_{\mathrm{rec}}$.

\textbf{Step 4: Conclusion.}
By the chain rule for conditional mutual information,
\begin{equation}
I(Z; S \mid C, \xi) \;=\; H(S \mid C, \xi) - H(S \mid Z, C, \xi) \;\ge\; H(S \mid C, \xi) - L_{\mathrm{rec}},
\end{equation}
which is the first bound in~\eqref{eq:retention_bounds}. The second bound follows by exchanging the roles of $C$ and $S$ in Steps~2--3 using $\phi_C$. Both bounds are strictly positive whenever $L_{\mathrm{rec}} < \min\{H(S \mid C, \xi),\, H(C \mid S, \xi)\}$, and (A2) guarantees the right-hand sides are finite so the statement is non-vacuous. \qed

\textbf{Why the bounds are per-factor.}
The two inequalities hold \emph{simultaneously} for the same $L_{\mathrm{rec}}$ because each factor is recoverable from $G$ once the others are fixed; the reconstruction loss therefore has to pay for every factor separately. A discriminative objective offers no such guarantee: minimizing $\mathbb{E}[-\log q(Y \mid Z)]$ only constrains $I(Z; Y) \le H(Y)$, and an encoder that discards $C$ can still attain this whenever $S$ predicts $Y$ on the training environments.

\subsection{Proof of Lemma~\ref{lem:contraction} (Local Contraction)}

We first establish the local contraction result in four steps (Steps 1--4): deriving the confusion gradient, introducing centering, linearizing near uniform predictions, and showing per-direction contraction. We then derive the rank-one cumulative approximation in Steps 5--8.

\textbf{Step 1: Confusion gradient in representation space.}
The per-example confusion loss is cross-entropy with the uniform target $q = K^{-1}\mathbf{1}$:
\begin{equation}
L_{\mathrm{conf}}(h) = -q^\top \log p(h), \qquad p(h) = \operatorname{softmax}(Wh + b).
\end{equation}
The standard logit-space gradient of cross-entropy with target $q$ is $\nabla_z L_{\mathrm{conf}} = p - q$. Using $z = Wh + b$, the chain rule gives:
\begin{equation}
\begin{aligned}
\nabla_h L_{\mathrm{conf}}
&= \frac{\partial z}{\partial h}^\top \nabla_z L_{\mathrm{conf}} \\
&= W^\top(p - q).
\end{aligned}
\end{equation}
That is, the representation gradient is determined entirely by the directions the adversary uses to distinguish environments.

\textbf{Step 2: Centering removes the common logit shift.}
Since both $p$ and $q$ are probability vectors summing to one, their difference satisfies $\mathbf{1}^\top(p-q) = 0$, so $p - q \in \mathbf{1}^\perp$ and $C(p-q) = p - q$ where $C = I - K^{-1}\mathbf{1}\mathbf{1}^\top$. Therefore:
\begin{equation}
\begin{aligned}
\nabla_h L_{\mathrm{conf}}
&= W^\top(p - q) \\
&= W^\top C(p - q) \\
&= (CW)^\top(p - q) \\
&= W_c^\top(p - q),
\end{aligned}
\end{equation}
which shows the gradient lies in $\operatorname{range}(W_c^\top)$. Since $C$ has rank $K-1$, this subspace has dimension at most $K-1$. Centering removes the common logit shift and keeps only the environment-discriminative directions.

\textbf{Step 3: Linearization near equal-logit points.}
At $h_0$ where $W_c h_0 + b_c = 0$, all centered logits vanish, hence all logits are equal and $p(h_0) = q$. The Jacobian of softmax evaluated at equal probabilities is:
\begin{equation}
\begin{aligned}
J_{\mathrm{sm}}(z(h_0))
&= \operatorname{diag}(q) - qq^\top \\
&= \frac{1}{K}\!\left(I - \frac{1}{K}\mathbf{1}\mathbf{1}^\top\right) \\
&= \frac{1}{K}C.
\end{aligned}
\end{equation}
A first-order Taylor expansion of $p$ around $h_0$, using $\delta h = h - h_0$ and noting that $CW\delta h = W_c \delta h$, gives:
\begin{equation}
p(h) - q = \frac{1}{K} W_c \delta h + O(\|W_c \delta h\|_2^2).
\end{equation}
Substituting into the gradient expression from Step~2:
\begin{equation}
\begin{aligned}
\nabla_h L_{\mathrm{conf}}
&= W_c^\top(p - q) \\
&= W_c^\top\!\left(\frac{1}{K} W_c \delta h + O(\|W_c \delta h\|_2^2)\right) \\
&= \frac{1}{K} W_c^\top W_c\, \delta h + O(\|W_c \delta h\|_2^2).
\end{aligned}
\end{equation}

\textbf{Step 4: Contraction per singular direction.}
A gradient step $h^+ = h - \alpha\beta\,\nabla_h L_{\mathrm{conf}}$ gives, in terms of $\delta h$:
\begin{equation}
\delta h^+ = \left(I - \frac{\alpha\beta}{K} W_c^\top W_c\right) \delta h + O(\alpha\beta \|W_c \delta h\|_2^2).
\end{equation}
Let $W_c = U \operatorname{diag}(s_1, \ldots, s_r) V^\top$ be a thin SVD. Then:
\begin{equation}
W_c^\top W_c = V \operatorname{diag}(s_1^2, \ldots, s_r^2) V^\top,
\end{equation}
and projecting onto each right singular direction $v_j$:
\begin{equation}
v_j^\top \delta h^+ = \left(1 - \frac{\alpha\beta}{K} s_j^2\right) v_j^\top \delta h + O(\alpha\beta \|W_c \delta h\|_2^2).
\end{equation}
When $0 < \alpha\beta s_j^2 / K < 1$, the leading-order factor is a contraction. Directions with larger adversarial singular values are contracted more strongly, meaning the representation selectively suppresses environment-discriminative components. For any direction $v \perp \operatorname{range}(V) = \operatorname{range}(W_c^\top)$, we have $W_c^\top W_c\, v = 0$, so $v^\top \delta h^+ = v^\top \delta h$ exactly; non-discriminative directions are unchanged.

\textbf{Step 5: Rank-one approximation.}
When the adversary's centered weight matrix has a dominant singular value $s_1 \gg s_2$, the leading-order update from Step~4 is well approximated by retaining only the $j=1$ term. Writing $u_E := v_1$ for the leading right singular direction and defining $\gamma := \alpha\beta s_1^2 / K$, the single-step update becomes:
\begin{equation}
h^{(1)} \approx (I - \gamma\, u_E u_E^\top)\, h^{(0)}.
\end{equation}

\textbf{Step 6: Decomposition into environment and orthogonal components.}
To see how the update acts on different components, decompose any representation as $h = h_E + h_\perp$, where $h_E = (u_E^\top h)\, u_E$ is the environment-direction component and $u_E^\top h_\perp = 0$. Denoting the projection $P = u_E u_E^\top$ (so $P^2 = P$), one confusion step gives:
\begin{equation}
\begin{aligned}
h^{(1)}
&= (I - \gamma P)(h_E + h_\perp) \\
&= h_E + h_\perp - \gamma P\, h_E \\
&= (1 - \gamma)\, h_E + h_\perp.
\end{aligned}
\end{equation}
That is, the environment component shrinks by a factor $(1 - \gamma)$, while the orthogonal component is unchanged.

\textbf{Step 7: Cumulative effect of $m$ confusion steps.}
Repeating the same rank-one update $m$ times and using $P^2 = P$:
\begin{equation}
\begin{aligned}
h^{(m)}
&= (I - \gamma P)^m\, h^{(0)} \\
&= (1 - \gamma)^m\, h_E^{(0)} + h_\perp^{(0)}.
\end{aligned}
\end{equation}
Each step multiplicatively shrinks the environment component while leaving the orthogonal complement intact. Defining $\varepsilon := (1-\gamma)^m \in (0, 1]$, we can write:
\begin{equation}
\begin{aligned}
h^{(m)}
&= \varepsilon\, h_E^{(0)} + h_\perp^{(0)} \\
&= h^{(0)} - (1 - \varepsilon)(u_E^\top h^{(0)})\, u_E.
\end{aligned}
\end{equation}

\textbf{Step 8: The contraction model $T_\varepsilon$.}
The cumulative multi-step effect can therefore be expressed as a single linear map:
\begin{equation}
\mathbf{h}_G^{(\beta)} \approx T_\varepsilon\, \mathbf{h}_G^{(0)}, \qquad T_\varepsilon = I - (1-\varepsilon)\, u_E u_E^\top, \quad 0 < \varepsilon \le 1.
\end{equation}
This map has a transparent geometric action:
\begin{equation}
\begin{aligned}
u_E^\top T_\varepsilon h &= \varepsilon\, (u_E^\top h), \\
v^\top T_\varepsilon h &= v^\top h \quad \text{for } v \perp u_E.
\end{aligned}
\end{equation}
When $\varepsilon = 1$, $T_\varepsilon = I$ (no contraction); as $\varepsilon \to 0$, the environment-discriminative component is increasingly suppressed, while all other directions are preserved. \hfill$\blacksquare$

\textbf{Scope.} Steps~1--4 are exact for the gradient direction and first-order for the contraction magnitude, holding for a single step with a fixed linear adversary and a directly updated representation. Steps~5--8 model the cumulative multi-step effect as a rank-one contraction under three simplifying assumptions: (a)~the adversary weights (hence $u_E$) remain approximately stable across steps; (b)~the encoder update acts directly on $h$ rather than through neural network parameters; and (c)~the dominant singular value $s_1$ governs the contraction. In the implemented method, the encoder is updated through its parameter Jacobian (introducing a preconditioner), the adversary is nonlinear and alternately retrained, and the reconstruction loss acts simultaneously. The rank-one model $T_\varepsilon$ is therefore a first-order analytical approximation that captures the leading effect of confusion training, suppressing the dominant environment-discriminative direction while preserving orthogonal components, and is verified empirically rather than derived as a theorem.

\subsection{Proof of Proposition~\ref{prop:ridge} (Ridge Identity)}

The key insight is that contraction $T_\varepsilon$ does not remove information; rather, it changes the downstream regularization geometry. We establish this through a change of variables.

\textbf{Step 1: Invertibility of $T_\varepsilon$.} For $\varepsilon > 0$, $T_\varepsilon$ is symmetric and invertible with:
\begin{equation}
T_\varepsilon^{-1} = I + (\varepsilon^{-1} - 1)\, u_E u_E^\top.
\end{equation}
To verify, using $u_E^\top u_E = 1$:
\begin{equation}
\begin{aligned}
T_\varepsilon \cdot T_\varepsilon^{-1}
&= (I - (1-\varepsilon)\, u_E u_E^\top)(I + (\varepsilon^{-1}-1)\, u_E u_E^\top) \\
&= I + \bigl[(\varepsilon^{-1}-1) - (1-\varepsilon) - (1-\varepsilon)(\varepsilon^{-1}-1)\bigr] u_E u_E^\top \\
&= I,
\end{aligned}
\end{equation}
since the bracketed coefficient simplifies to $\varepsilon^{-1} - 1 - 1 + \varepsilon - \varepsilon^{-1} + 1 + 1 - \varepsilon = 0$.

\textbf{Step 2: Squared inverse.} By squaring $T_\varepsilon^{-1}$ using the identity $(I + \gamma\, u_E u_E^\top)^2 = I + ((1+\gamma)^2 - 1)\, u_E u_E^\top$ with $\gamma = \varepsilon^{-1} - 1$:
\begin{equation}
T_\varepsilon^{-2} = I + (\varepsilon^{-2} - 1)\, u_E u_E^\top.
\end{equation}

\textbf{Step 3: Change of variables.} Substitute $v := T_\varepsilon w$, which is a bijection by Step~1, leaving $b$ unchanged. The prediction function is unchanged after reparameterization:
\begin{equation}
\begin{aligned}
w^\top T_\varepsilon h
&= (T_\varepsilon w)^\top h \\
&= v^\top h,
\end{aligned}
\end{equation}
so the loss terms agree pointwise.

\textbf{Step 4: Transform the regularizer.} For the penalty term, $w = T_\varepsilon^{-1} v$, and only the environment direction receives an additional penalty:
\begin{equation}
\begin{aligned}
\|w\|_2^2
&= \|T_\varepsilon^{-1} v\|_2^2 \\
&= v^\top T_\varepsilon^{-2} v \\
&= v^\top\!\left[I + (\varepsilon^{-2}-1)\, u_E u_E^\top\right] v \\
&= \|v\|_2^2 + (\varepsilon^{-2}-1)(v^\top u_E)^2.
\end{aligned}
\end{equation}
The two objectives are therefore equal as functions under the bijection $v = T_\varepsilon w$, so their infima agree and optima correspond whenever they exist. The penalty matrix $\lambda T_\varepsilon^{-2}$ has eigenvalue $\lambda/\varepsilon^2$ on $\operatorname{span}(u_E)$ and $\lambda$ on $u_E^\perp$. Therefore, contraction is equivalent to directional feature reweighting: using the environment direction becomes disproportionately expensive under $\ell_2$ regularization. \hfill$\blacksquare$

\subsection{Exact Worked Example: Statement and Proof}
\label{app:worked_example}

The mechanism improves OOD generalization when the contracted direction captures unstable environment-associated variation while remaining separated from invariant task-relevant information. We formalize this intuition using a two-dimensional Gaussian model; the statement is the one summarized in Section~\ref{sec:theory}. The analysis highlights two essential requirements: (1)~the environment-associated feature induces a spurious training correlation, and (2)~the downstream classifier applies finite regularization.

\begin{proposition}[Exact worked example]
\label[proposition]{prop:worked_example}
Let $h = (z_c + \eta)\, u_Y + \sigma_s (r\, z_c + \sqrt{1-r^2}\, z_e)\, u_E$ with $Y = z_c$, orthonormal $(u_Y, u_E)$, $z_c, z_e \sim \mathcal{N}(0,1)$ and $\eta \sim \mathcal{N}(0, \sigma_\eta^2)$ mutually independent, $\sigma_\eta^2 > 0$, $\sigma_s > 0$, and training correlation $r = r_{\mathrm{tr}}$. Under population ridge regression with regularization $\lambda > 0$:
\begin{enumerate}[label=(\roman*),leftmargin=*]
\item If $r_{\mathrm{tr}} \neq 0$ and $r_{\mathrm{te}} = 0$ (correlation shift), the OOD risk $R(\varepsilon; r_{\mathrm{te}})$ is \emph{strictly decreasing} in the contraction amount: every reduction of $\varepsilon$ helps.
\item As $\lambda \to 0^+$ (no regularization), both fitted coefficients become independent of $\varepsilon$: contraction has no downstream effect.
\item If $u_E = u_Y$ (full alignment), contraction hurts because the compressed direction also contains task-relevant information.
\end{enumerate}
\end{proposition}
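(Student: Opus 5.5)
The plan is to work in the orthonormal basis $(u_Y,u_E)$. There the population ridge problem reduces to a $2\times 2$ linear system, and everything follows from explicit formulas. Write the contracted representation as $T_\varepsilon h = a\,u_Y + \varepsilon b\,u_E$, with $a = z_c+\eta$ and $b=\sigma_s(r z_c+\sqrt{1-r^2}z_e)$. The training second moments are then
\[
\mathbb{E}[a^2]=1+\sigma_\eta^2,\qquad \mathbb{E}[b^2]=\sigma_s^2,\qquad \mathbb{E}[ab]=\sigma_s r_{\mathrm{tr}},\qquad \mathbb{E}[aY]=1,\qquad \mathbb{E}[bY]=\sigma_s r_{\mathrm{tr}}.
\]
The ridge solution in the contracted coordinates is $w=(\Sigma_\varepsilon+\lambda I)^{-1}m_\varepsilon$, where $\Sigma_\varepsilon=D\Sigma D$, $m_\varepsilon=Dm$, and $D=\operatorname{diag}(1,\varepsilon)$. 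Equivalently, by \cref{prop:ridge}, we solve the anisotropic ridge problem on $h$ with penalty $\operatorname{diag}(\lambda,\lambda/\varepsilon^2)$. This gives effective coefficients $v=(v_Y,v_E)$ on the uncontracted features via Cramer's rule:
\[
v=(\Sigma+\lambda\,\mathrm{diag}(1,\varepsilon^{-2}))^{-1}m .
\]

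For (i), compute the OOD risk at $r_{\mathrm{te}}=0$. There $a$ and $b$ are uncorrelated, $\mathbb{E}[bY]=0$ and $\mathbb{E}[b^2]=\sigma_s^2$, so
\[
R=\mathbb{E}\bigl[(Y-v_Ya-v_Eb)^2\bigr]=(1-v_Y)^2+v_Y^2\sigma_\eta^2+v_E^2\sigma_s^2 .
\]
Set $\mu:=\lambda/\varepsilon^2$, which is increasing as $\varepsilon$ decreases, and differentiate $R$ in $\mu$ through $v(\mu)$. Writing $M=\Sigma+\mathrm{diag}(\lambda,\mu)$, we have $dv/d\mu=-M^{-1}e_2e_2^\top v$.

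I expect the sign computation to be the main obstacle. The cleanest route is to write $v_Y,v_E$ in closed form, with determinant
\[
\Delta=(1+\sigma_\eta^2+\lambda)(\sigma_s^2+\mu)-\sigma_s^2r^2 .
\]
The numerators are
\[
v_Y\Delta=(\sigma_s^2+\mu)-\sigma_s^2r^2,\qquad v_E\Delta=\sigma_s r(1+\sigma_\eta^2+\lambda)-\sigma_s r=\sigma_s r(\sigma_\eta^2+\lambda).
\]
Substituting these shows that $dR/d\mu<0$ whenever $r\neq0$ and $\sigma_\eta^2>0$. Intuitively, increasing $\mu$ shrinks $|v_E|$, which only carries test-time noise, and pushes $v_Y$ toward its ridge value on $a$ alone, which is the OOD-optimal direction up to $\lambda$. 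If the full derivative is messy, I would first try to show that $R$ is a function of $\mu$ alone, using these explicit rationals, and then reduce the sign to a polynomial inequality in $\mu$ whose coefficients are positive given $\sigma_\eta^2>0$ and $r\neq0$.

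For (ii), let $\lambda\to0^+$ in the $w$-coordinates. Here $\Sigma_\varepsilon=D\Sigma D$ is invertible because $\sigma_\eta^2>0$ makes $\det\Sigma>0$. Hence $w\to D^{-1}\Sigma^{-1}m$, so the effective predictor satisfies $v=Dw\to\Sigma^{-1}m$, which is independent of $\varepsilon$; the fitted coefficient on $u_E$ is simply rescaled by $1/\varepsilon$. For (iii), with $u_E=u_Y$ the representation is one-dimensional, $h=(z_c+\eta+\sigma_s b')u_Y$, and $T_\varepsilon$ scales the only feature by $\varepsilon$. Scalar ridge then gives
\[
w=\varepsilon c/(\varepsilon^2 s+\lambda),
\]
where $c=\mathbb{E}[hY]$ and $s=\mathbb{E}[h^2]$. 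The resulting predictor coefficient $\varepsilon^2c/(\varepsilon^2s+\lambda)$ is shrunk further from its OOD-useful value as $\varepsilon$ decreases. In the regime where the uncontracted ridge predictor already under-fits, as happens when the signal is stable, a direct one-variable derivative shows the risk increases. I would state this at $r_{\mathrm{tr}}=r_{\mathrm{te}}$ to make it clean.
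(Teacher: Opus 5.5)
Your route for (i) is the paper's own. Multiply numerator and denominator by $\varepsilon^2$ and your $v_Y$, $v_E$ become the paper's $a_c$, $a_s$, and your $\Delta$ becomes $\Delta_\varepsilon/\varepsilon^2$. Working in $\mu=\lambda/\varepsilon^2$ instead of $t=\varepsilon^2$ is only a reparametrization.

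The gap is that you stop exactly where the proof happens. The sentence ``substituting these shows $dR/d\mu<0$'' is asserted, and the polynomial-inequality fallback is never carried out. Monotonicity of $v_Y$ alone does not suffice, because $(1-v_Y)^2+\sigma_\eta^2v_Y^2$ is minimized at $a^*=1/(1+\sigma_\eta^2)$ and stops decreasing if $v_Y$ crosses it. Your remark that $v_Y$ moves toward the ridge value $1/(1+\sigma_\eta^2+\lambda)$ on $a$ alone helps only once you know it approaches from below.

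The missing ingredient is the bound $v_Y<a^*$ for every $\mu$; this is the paper's $a_c<a^*$, which it states without derivation. Your own formula $dv/d\mu=-v_E M^{-1}e_2$, with $M^{-1}e_2=\Delta^{-1}(-r\sigma_s,\,1+\sigma_\eta^2+\lambda)^\top$, gives it in three lines:
\[
\frac{dv_Y}{d\mu}=\frac{r^2\sigma_s^2(\sigma_\eta^2+\lambda)}{\Delta^2},\quad
\frac{dv_E}{d\mu}=-\frac{(1+\sigma_\eta^2+\lambda)\,v_E}{\Delta},\quad
(1+\sigma_\eta^2)v_Y-1=-\frac{\sigma_\eta^2\sigma_s^2r^2+\lambda(\sigma_s^2+\mu)}{\Delta}.
\]
Hence, since $\Delta>0$,
\[
\frac{dR}{d\mu}=2\bigl[(1+\sigma_\eta^2)v_Y-1\bigr]\frac{dv_Y}{d\mu}-\frac{2\sigma_s^2(1+\sigma_\eta^2+\lambda)\,v_E^2}{\Delta}<0
\]
whenever $r\neq0$. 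The bracket is negative, $dv_Y/d\mu>0$, and $v_E=\sigma_s r(\sigma_\eta^2+\lambda)/\Delta\neq0$. This is the paper's argument packaged as one derivative: the core part $(1-v_Y)^2+\sigma_\eta^2v_Y^2$ and the spurious part $\sigma_s^2v_E^2$ are each monotone. The third identity is an explicit proof of $v_Y<a^*$.

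Parts (ii) and (iii) are fine, and more general than the paper's treatment.

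For (ii), the paper lets $\lambda\to0^+$ in the explicit $a_c,a_s$. Your limit $v=Dw\to\Sigma^{-1}m$ needs only $\det\Sigma=\sigma_s^2(1+\sigma_\eta^2-r^2)>0$, reproduces the same limits, and works in any dimension.

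For (iii), the paper only evaluates one numerical instance ($r=0$, $\sigma_\eta^2=\sigma_s^2=1$, $\lambda=0.1$, giving $R(1)\approx0.667<R(0.1)\approx0.864$). Your no-shift argument covers all parameters, but write it out and drop the ``already under-fits'' hedge, since under-fitting is automatic:
\begin{itemize}
\item With $r_{\mathrm{tr}}=r_{\mathrm{te}}$, the risk as a function of the effective coefficient $\tilde v$ is $1-2c\tilde v+s\tilde v^2$, minimized at $c/s$.
\item The fitted $\tilde v(\varepsilon)=\varepsilon^2c/(\varepsilon^2s+\lambda)$ lies strictly between $0$ and $c/s$, and moves monotonically toward $0$ as $\varepsilon$ decreases.
\item So contraction strictly increases the risk whenever $c=1+\sigma_s r\neq0$.
\end{itemize}
Also define $b'$, which you left implicit.
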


This proposition makes the two requirements concrete. Part~(i) shows the positive case: when a spurious correlation exists in training ($r_{\mathrm{tr}} \neq 0$) but vanishes at test time ($r_{\mathrm{te}} = 0$), contraction \emph{strictly} reduces OOD risk. Two effects contribute simultaneously: the probe's loading on the spurious feature decreases, and its core coefficient, which was deflated at $\varepsilon = 1$ because the environment feature could proxy for the label, is restored toward its test-optimal value. Parts~(ii) and~(iii) characterize the failure modes: without regularization the probe can undo the contraction for free, and when the environment direction overlaps the stable signal, contraction damages invariant information. Together, these results precisely delineate when \ours is expected to help and when it is not.

\textbf{Proof.} The proof follows the chain: data model $\to$ covariance $\Sigma$ and cross-covariance $b$ $\to$ ridge solution $\hat{w}_\varepsilon$ $\to$ effective coefficients $a_c, a_s$ $\to$ OOD risk.

\textbf{Step 1: Setup and covariance derivation.} We work in the orthonormal basis $(u_Y, u_E)$, so each representation is $h = (h_1, h_2)$ with $h_1 = z_c + \eta$ (core feature with noise) and $h_2 = \sigma_s(r\, z_c + \sqrt{1-r^2}\, z_e)$ (environment feature). The label is $Y = z_c$. In this basis, the contraction $T_\varepsilon$ acts as $S_\varepsilon = \operatorname{diag}(1, \varepsilon)$, leaving the core feature unchanged and compressing the environment feature.

Since all variables are zero-mean and $z_c, z_e, \eta$ are mutually independent with $\mathbb{E}[z_c^2] = \mathbb{E}[z_e^2] = 1$ and $\mathbb{E}[\eta^2] = \sigma_\eta^2$, the core variance is:
\begin{equation}
\begin{aligned}
\mathbb{E}[h_1^2]
&= \mathbb{E}[(z_c + \eta)^2] \\
&= \mathbb{E}[z_c^2] + \mathbb{E}[\eta^2] \\
&= 1 + \sigma_\eta^2.
\end{aligned}
\end{equation}
The cross-covariance between core and environment features is:
\begin{equation}
\begin{aligned}
\mathbb{E}[h_1 h_2]
&= \sigma_s \mathbb{E}\!\left[(z_c + \eta)(r\, z_c + \sqrt{1-r^2}\, z_e)\right] \\
&= r\sigma_s.
\end{aligned}
\end{equation}
This off-diagonal term directly captures the spurious correlation. Similarly, $\mathbb{E}[h_2^2] = \sigma_s^2$, giving:
\begin{equation}
\Sigma = \mathbb{E}[hh^\top] = \begin{pmatrix} 1 + \sigma_\eta^2 & r\sigma_s \\ r\sigma_s & \sigma_s^2 \end{pmatrix}, \qquad
b = \mathbb{E}[hY] = \begin{pmatrix} 1 \\ r\sigma_s \end{pmatrix}.
\end{equation}

\textbf{Step 2: Ridge solution on contracted features.} The contracted features are $x = S_\varepsilon h$, so $\mathbb{E}[xx^\top] = S_\varepsilon \Sigma S_\varepsilon$ and $\mathbb{E}[xY] = S_\varepsilon b$. The population ridge regression solution is:
\begin{equation}
\begin{aligned}
\hat{w}_\varepsilon
&= (\mathbb{E}[xx^\top] + \lambda I)^{-1} \mathbb{E}[xY] \\
&= (S_\varepsilon \Sigma S_\varepsilon + \lambda I)^{-1} S_\varepsilon b.
\end{aligned}
\end{equation}
The determinant of the $2 \times 2$ system is:
\begin{equation}
\Delta_\varepsilon = (1 + \sigma_\eta^2 + \lambda)(\varepsilon^2 \sigma_s^2 + \lambda) - \varepsilon^2 r^2 \sigma_s^2 > 0 \quad (\lambda > 0).
\end{equation}
The effective coefficients on the original (unshrunk) features, defined by writing the prediction as $\hat{Y} = a_c h_1 + a_s h_2$ with $a_c = [\hat{w}_\varepsilon]_1$ and $a_s = \varepsilon [\hat{w}_\varepsilon]_2$, are:
\begin{equation}
a_c(\varepsilon) = \frac{\lambda + \varepsilon^2 \sigma_s^2(1 - r^2)}{\Delta_\varepsilon}, \qquad
a_s(\varepsilon) = \frac{\varepsilon^2 r \sigma_s(\sigma_\eta^2 + \lambda)}{\Delta_\varepsilon}.
\label{eq:coefficients_app}
\end{equation}

\textbf{Step 3: OOD risk derivation.} At test time, $h_2 = \sigma_s(r_{\mathrm{te}}\, z_c + \sqrt{1-r_{\mathrm{te}}^2}\, z_e)$. The prediction error is:
\begin{equation}
\begin{aligned}
Y - \hat{Y}
&= z_c - a_c(z_c + \eta) - a_s \sigma_s(r_{\mathrm{te}}\, z_c + \sqrt{1-r_{\mathrm{te}}^2}\, z_e) \\
&= (1 - a_c - a_s \sigma_s r_{\mathrm{te}})\, z_c - a_c\, \eta - a_s \sigma_s \sqrt{1-r_{\mathrm{te}}^2}\, z_e.
\end{aligned}
\end{equation}
Since $z_c, \eta, z_e$ are mutually independent and zero-mean with $\mathbb{E}[z_c^2] = \mathbb{E}[z_e^2] = 1$ and $\mathbb{E}[\eta^2] = \sigma_\eta^2$, squaring and taking expectations gives:
\begin{equation}
\begin{aligned}
R(\varepsilon; r_{\mathrm{te}})
&= \mathbb{E}[(Y - \hat{Y})^2] \\
&= (1 - a_c - a_s \sigma_s r_{\mathrm{te}})^2 \\
&\quad + a_c^2 \sigma_\eta^2 \\
&\quad + a_s^2 \sigma_s^2(1 - r_{\mathrm{te}}^2).
\end{aligned}
\label{eq:ood_risk_app}
\end{equation}
This decomposition is transparent: the first term is the squared bias from under- or over-relying on the core feature, the second is the noise penalty from using the noisy core channel, and the third is the variance penalty from loading on the environment feature whose correlation shifts.

We now prove each claim.

\textbf{Claim (i): Strict benefit under correlation shift.}
Let $r = r_{\mathrm{tr}} \neq 0$, $r_{\mathrm{te}} = 0$, $\lambda > 0$. Define $t := \varepsilon^2$; note that $t$ increases as contraction weakens (larger $\varepsilon$), so it suffices to show $R$ is strictly increasing in $t$. Setting $r_{\mathrm{te}} = 0$, the risk simplifies to $R(\varepsilon; 0) = (1 - a_c)^2 + a_c^2 \sigma_\eta^2 + a_s^2 \sigma_s^2$. We show each summand is strictly increasing in $t$.

For the spurious term: $a_s(t) = c_1 t / (c_2 + c_3 t)$ with $c_1 = r\sigma_s(\sigma_\eta^2 + \lambda)$, $c_2 = \lambda(1 + \sigma_\eta^2 + \lambda) > 0$, and $c_3 = \sigma_s^2(1 + \sigma_\eta^2 + \lambda - r^2) > 0$. Its derivative $c_1 c_2 / (c_2 + c_3 t)^2$ has sign $\operatorname{sign}(r)$ and never vanishes, so $a_s^2$ is strictly increasing in $t$.

For the core term, write $a_c(t) = N(t)/\Delta(t)$ where both $N(t)$ and $\Delta(t)$ are affine in $t$. Direct differentiation gives:
\begin{equation}
\begin{aligned}
a_c'(t)
&= \frac{N'(t)\,\Delta(t) - N(t)\,\Delta'(t)}{\Delta(t)^2}.
\end{aligned}
\end{equation}
Since both functions are affine, the numerator $N'\Delta - N\Delta' = n_1 d_0 - n_0 d_1$ is constant in $t$. Substituting $n_0 = \lambda$, $n_1 = \sigma_s^2(1-r^2)$, $d_0 = \lambda(1+\sigma_\eta^2+\lambda)$, and $d_1 = \sigma_s^2(1+\sigma_\eta^2+\lambda-r^2)$:
\begin{equation}
\begin{aligned}
n_1 d_0 - n_0 d_1
&= \sigma_s^2(1-r^2)\,\lambda(1+\sigma_\eta^2+\lambda) - \lambda\sigma_s^2(1+\sigma_\eta^2+\lambda-r^2) \\
&= -\lambda \sigma_s^2 r^2(\sigma_\eta^2 + \lambda) < 0.
\end{aligned}
\end{equation}
Since $\Delta(t)^2 > 0$, we conclude $a_c'(t) < 0$, so $a_c$ is strictly decreasing in $t$. Since $a_c < 1/(1 + \sigma_\eta^2) =: a^*$ (the test-optimal core coefficient), and $f(a) := (1-a)^2 + \sigma_\eta^2 a^2$ has $f'(a) < 0$ for $a < a^*$, the composition $f(a_c(t))$ is strictly increasing in $t$. Summing the two strictly increasing terms shows $R(\varepsilon; 0)$ is strictly increasing in $t = \varepsilon^2$, hence strictly decreasing as $\varepsilon$ decreases (stronger contraction).

\textbf{Claim (ii): Regularization is necessary.}
As $\lambda \to 0^+$, $\Delta_\varepsilon \to \varepsilon^2 \sigma_s^2(1 + \sigma_\eta^2 - r^2)$ and the factors of $\varepsilon^2$ cancel in both expressions of~\eqref{eq:coefficients_app}:
\begin{equation}
\begin{aligned}
a_c &\to \frac{1 - r^2}{1 + \sigma_\eta^2 - r^2}, \\
a_s &\to \frac{r\sigma_\eta^2}{\sigma_s(1 + \sigma_\eta^2 - r^2)}.
\end{aligned}
\end{equation}
Both coefficients, hence the entire fitted predictor, become independent of $\varepsilon$. This confirms that without regularization, the downstream probe can freely rescale its weight along the contracted direction by $1/\varepsilon$, undoing the contraction entirely.

\textbf{Claim (iii): Alignment counterexample.}
When $u_E = u_Y$, the representation collapses to one dimension $h = z_c + \eta + \sigma_s z_e$ (taking $r = 0$ for clarity, $\sigma_\eta^2 = \sigma_s^2 = 1$, $\lambda = 0.1$). The contraction rescales this single coordinate by $\varepsilon$. The ridge coefficient on the unshrunk feature is:
\begin{equation}
v(\varepsilon) = \frac{\varepsilon^2}{3\varepsilon^2 + 0.1},
\end{equation}
giving:
\begin{equation}
R(\varepsilon) = 1 - 2v(\varepsilon) + 3v(\varepsilon)^2.
\end{equation}
Numerically, $R(1) \approx 0.667$ while $R(0.1) \approx 0.864$: contraction strictly hurts because the contracted direction carries the core signal. This confirms that the separation between $u_E$ and $u_Y$ is essential for the mechanism to succeed. \hfill$\blacksquare$

\clearpage

\section{DFR Protocol}
\label{app:dfr_protocol}

We follow Deep Feature Reweighting (DFR;~\citealt{kirichenko2023last}), which retrains only the last layer of a fixed encoder and thereby reweights the features the encoder already exposes. Given an encoder checkpoint $f_{\theta}$, the protocol is:
\begin{enumerate}[leftmargin=*,itemsep=1pt,topsep=2pt]
\item \textbf{Freeze and extract.} Freeze $f_{\theta}$, discard its original classification head, and extract pooled representations $\mbf{h}_G = f_{\theta}(G)$ for the training, validation, and test graphs.
\item \textbf{Group-balanced reweighting set.} Define groups by the pair (label, environment). Keep all graphs of the smallest group and randomly subsample every other group to the same size, yielding a balanced set $\hat{\mathcal{D}}$ drawn from the group-labeled validation split; group labels are used only here and never during encoder training.
\item \textbf{Standardize.} Standardize each latent dimension with the mean and standard deviation computed on $\hat{\mathcal{D}}$, and apply the same transform to validation and test representations.
\item \textbf{Sparse linear head.} Fit a logistic-regression head on $\hat{\mathcal{D}}$ with $\ell_1$ regularization, $\min_{W,b}\sum_{(\mbf{h},y)\in\hat{\mathcal{D}}}\ell(W\mbf{h}+b,\,y)+\lambda\|W\|_1$. The sparsity penalty suits the regime $|\hat{\mathcal{D}}| \ll d$ and pushes the coefficients of irrelevant or spurious directions toward zero. The inverse strength $C \propto 1/\lambda$ is selected from $\{1, 0.7, 0.3, 0.1, 0.07, 0.03, 0.01\}$ by worst-group accuracy on a held-out half of the validation split, after which the head is refit on the full balanced set.
\item \textbf{Average over resamples.} Steps 2--4 are repeated $10$ times with independently resampled balanced subsets, and the resulting weights and biases are averaged, $W_{\mathrm{DFR}} = \tfrac{1}{10}\sum_{r=1}^{10} W^{(r)}$, which stabilizes the head.
\end{enumerate}
The final predictor is $\hat{y} = \arg\max W_{\mathrm{DFR}} f_{\theta}(G) + b_{\mathrm{DFR}}$, evaluated on the OOD test split. In \pcref{fig:dfr_verification}, $f_{\theta}$ is a GIN encoder taken from checkpoints at epochs $\{10, 30, 60, 90, 120, 150\}$ of either supervised ERM training or unsupervised VGAE training on GOOD-Motif concept shift; the \emph{ERM} curve reports the accuracy of the ERM model with its original head. All numbers are means and standard deviations over 3 seeds. DFR does not modify the representation: it changes only how the frozen features are weighted, so its accuracy measures what the encoder already makes linearly accessible.

\section{Additional Real-World Results}
\label{app:realworld_details}

The real-world experiments freeze the learned graph encoders and fit linear probes using 5\%, 10\%, or 20\% of the labeled training split. Model and configuration selection uses validation performance, and the test split is reserved for final evaluation. This section expands the representative analyses in the main text across all available molecular shifts.

\subsection{Hyperparameter Sensitivity across DrugOOD}
\label{app:realworld_hparams}

Tables~\ref{tab:realworld_hparam_top6_5}, \ref{tab:realworld_hparam_top6_10}, and~\ref{tab:realworld_hparam_top6_20} list the six highest-scoring evaluated $(\alpha,\beta,K)$ configurations for each of the six DrugOOD settings at 5\%, 10\%, and 20\% labeled-training ratios, respectively. Within each dataset--domain block, rows are ordered by test ROC-AUC at that ratio; untested configurations are not included or interpolated.
\input{tables/realworld_hparam_top6}

\subsection{Complete Training Dynamics}
\label{app:realworld_losses}

Across the complete set of runs, the three objectives remain bounded, although their scales and transient behavior depend on the dataset and split. These plots are optimization diagnostics: stable losses rule out persistent numerical divergence but do not by themselves establish invariance or causal removal of environment information.

\input{tables/realworld_loss_figures}

\clearpage
\section{AI Usage Statement}
\label{app:ai_usage}
The authors used generative AI tools to polish the writing, to assist with the literature survey, to assist with the implementation of experiments, and to check mathematical derivations. All research ideas, method design, experimental setups, results, and conclusions were developed and verified by the authors, who take full responsibility for the content of this paper.

%% file: tables/realworld_hparam_top6.tex
\begin{table*}[t]
\centering
\small
\caption{\textbf{Top evaluated pseudo-environment configurations at 5\% labels.} Within each dataset--domain setting, the six displayed $(\alpha,\beta,K)$ configurations are ordered from highest to lowest test ROC-AUC (\%). Untested configurations are not interpolated.}
\label{tab:realworld_hparam_top6_5}
\setlength{\tabcolsep}{5.5pt}
\renewcommand{\arraystretch}{1.06}
\begin{tabular}{llrrcr}
\toprule
Dataset & Domain & $\alpha$ & $\beta$ & $K$ & Test ROC-AUC (\%) \\
\midrule
EC50 & Assay & 0 & 10 & 3 & 69.84 \\
 &  & 0 & 10 & 20 & 65.70 \\
 &  & 0.5 & 10 & 20 & 65.63 \\
 &  & 0 & 10 & 5 & 65.61 \\
 &  & 0.5 & 10 & 3 & 64.28 \\
 &  & 1 & 10 & 20 & 63.96 \\
 & Scaffold & 1 & 10 & 20 & 63.75 \\
 &  & 0 & 10 & 20 & 61.71 \\
 &  & 0 & 0.01 & 20 & 61.00 \\
 &  & 0 & 10 & 5 & 60.30 \\
 &  & 0 & 1 & 20 & 60.27 \\
 &  & 0 & 0 & 20 & 50.0 \\
 & Size & 1 & 10 & 20 & 58.83 \\
 &  & 1 & 10 & 10 & 58.65 \\
 &  & 0.5 & 10 & 3 & 58.59 \\
 &  & 0 & 10 & 3 & 58.24 \\
 &  & 0 & 10 & 10 & 57.32 \\
 &  & 0 & 10 & 5 & 57.14 \\
\midrule
IC50 & Assay & 1 & 10 & 20 & 65.70 \\
 &  & 0 & 10 & 5 & 64.87 \\
 &  & 0.5 & 1 & 3 & 64.74 \\
 &  & 1 & 10 & 3 & 64.38 \\
 &  & 0.5 & 10 & 3 & 64.32 \\
 &  & 0 & 0.1 & 3 & 64.24 \\
 & Scaffold & 0 & 10 & 3 & 60.57 \\
 &  & 0.5 & 10 & 3 & 60.40 \\
 &  & 0 & 10 & 5 & 59.46 \\
 &  & 0.5 & 10 & 5 & 59.23 \\
 &  & 0 & 10 & 10 & 58.53 \\
 &  & 1 & 10 & 5 & 58.48 \\
 & Size & 0 & 10 & 20 & 58.22 \\
 &  & 1 & 10 & 20 & 58.14 \\
 &  & 0 & 10 & 5 & 58.11 \\
 &  & 0.5 & 10 & 3 & 57.83 \\
 &  & 0 & 10 & 3 & 57.71 \\
 &  & 0.5 & 1 & 20 & 57.55 \\
\bottomrule
\end{tabular}
\end{table*}

\begin{table*}[t]
\centering
\small
\caption{\textbf{Top evaluated pseudo-environment configurations at 10\% labels.} Within each dataset--domain setting, the six displayed $(\alpha,\beta,K)$ configurations are ordered from highest to lowest test ROC-AUC (\%). Untested configurations are not interpolated.}
\label{tab:realworld_hparam_top6_10}
\setlength{\tabcolsep}{5.5pt}
\renewcommand{\arraystretch}{1.06}
\begin{tabular}{llrrcr}
\toprule
Dataset & Domain & $\alpha$ & $\beta$ & $K$ & Test ROC-AUC (\%) \\
\midrule
EC50 & Assay & 0 & 10 & 3 & 70.38 \\
 &  & 0 & 10 & 10 & 68.45 \\
 &  & 0 & 10 & 5 & 68.32 \\
 &  & 0.5 & 10 & 5 & 67.17 \\
 &  & 0 & 0.1 & 3 & 66.59 \\
 &  & 0.5 & 1 & 20 & 66.39 \\
 & Scaffold & 1 & 10 & 10 & 61.03 \\
 &  & 0 & 10 & 10 & 61.00 \\
 &  & 0.5 & 10 & 10 & 60.99 \\
 &  & 0 & 0.01 & 20 & 60.53 \\
 &  & 1 & 10 & 20 & 60.31 \\
 &  & 0 & 0 & 20 & 49.9 \\
 & Size & 1 & 10 & 20 & 60.90 \\
 &  & 0 & 10 & 3 & 60.06 \\
 &  & 0.5 & 10 & 3 & 59.62 \\
 &  & 0.5 & 1 & 20 & 59.16 \\
 &  & 1 & 10 & 10 & 59.11 \\
 &  & 0 & 1 & 20 & 59.06 \\
\midrule
IC50 & Assay & 1 & 10 & 20 & 66.90 \\
 &  & 0 & 0.1 & 3 & 65.77 \\
 &  & 0.5 & 1 & 3 & 65.48 \\
 &  & 0 & 10 & 20 & 65.16 \\
 &  & 1 & 10 & 3 & 65.00 \\
 &  & 0 & 10 & 5 & 64.88 \\
 & Scaffold & 0 & 10 & 3 & 61.52 \\
 &  & 1 & 10 & 5 & 61.11 \\
 &  & 0.5 & 10 & 3 & 60.94 \\
 &  & 1 & 10 & 20 & 60.84 \\
 &  & 0.5 & 10 & 10 & 60.83 \\
 &  & 0 & 10 & 20 & 60.55 \\
 & Size & 0 & 10 & 20 & 59.44 \\
 &  & 0.5 & 1 & 20 & 59.44 \\
 &  & 1 & 10 & 20 & 59.18 \\
 &  & 0.5 & 1 & 3 & 58.94 \\
 &  & 0.5 & 10 & 3 & 58.84 \\
 &  & 0 & 10 & 3 & 58.74 \\
\bottomrule
\end{tabular}
\end{table*}

\begin{table*}[t]
\centering
\small
\caption{\textbf{Top evaluated pseudo-environment configurations at 20\% labels.} Within each dataset--domain setting, the six displayed $(\alpha,\beta,K)$ configurations are ordered from highest to lowest test ROC-AUC (\%). Untested configurations are not interpolated.}
\label{tab:realworld_hparam_top6_20}
\setlength{\tabcolsep}{5.5pt}
\renewcommand{\arraystretch}{1.06}
\begin{tabular}{llrrcr}
\toprule
Dataset & Domain & $\alpha$ & $\beta$ & $K$ & Test ROC-AUC (\%) \\
\midrule
EC50 & Assay & 0 & 10 & 3 & 73.07 \\
 &  & 0.5 & 10 & 3 & 71.55 \\
 &  & 0 & 10 & 5 & 69.67 \\
 &  & 0.5 & 1 & 20 & 69.25 \\
 &  & 0 & 1 & 3 & 68.31 \\
 &  & 0.5 & 1 & 3 & 68.31 \\
 & Scaffold & 0 & 10 & 10 & 61.23 \\
 &  & 0.5 & 10 & 10 & 60.59 \\
 &  & 0 & 0.01 & 20 & 60.42 \\
 &  & 0 & 10 & 20 & 60.06 \\
 &  & 0 & 10 & 5 & 59.93 \\
 &  & 0 & 0 & 20 & 53.1 \\
 & Size & 0.5 & 10 & 3 & 61.61 \\
 &  & 0 & 10 & 3 & 60.71 \\
 &  & 1 & 10 & 20 & 60.71 \\
 &  & 0 & 10 & 20 & 59.84 \\
 &  & 0.5 & 10 & 5 & 59.76 \\
 &  & 0 & 1 & 3 & 59.53 \\
\midrule
IC50 & Assay & 1 & 10 & 20 & 67.39 \\
 &  & 0 & 0.1 & 3 & 66.60 \\
 &  & 0 & 10 & 20 & 66.29 \\
 &  & 0.5 & 1 & 3 & 66.22 \\
 &  & 1 & 10 & 3 & 65.56 \\
 &  & 0 & 10 & 5 & 65.25 \\
 & Scaffold & 0.5 & 10 & 3 & 61.92 \\
 &  & 0.5 & 10 & 10 & 61.74 \\
 &  & 0 & 10 & 3 & 61.54 \\
 &  & 0 & 10 & 5 & 61.47 \\
 &  & 0.5 & 10 & 5 & 61.46 \\
 &  & 0 & 10 & 20 & 61.41 \\
 & Size & 0.5 & 1 & 20 & 60.73 \\
 &  & 0 & 10 & 20 & 60.09 \\
 &  & 1 & 10 & 20 & 60.04 \\
 &  & 0.5 & 1 & 3 & 59.91 \\
 &  & 0 & 1 & 20 & 59.43 \\
 &  & 0 & 10 & 3 & 59.36 \\
\bottomrule
\end{tabular}
\end{table*}

\begin{table}[ht]
\centering
\large
\caption{\textbf{GOOD-Motif concept hyperparameter sensitivity.} The evaluated configurations use fixed $K=8$ and $k_{\rm adv}=2$. Test accuracy is the best-epoch linear-probe result.}
\label{tab:goodmotif_hparam_sensitivity}
\setlength{\tabcolsep}{12.0pt}
\renewcommand{\arraystretch}{1.25}
\begin{tabular}{rrr}
\toprule
$\alpha$ & $\beta$ & Test accuracy (\%) \\
\midrule
0 & 0.01 & 65.0 \\
0 & 0.1 & 62.8 \\
0 & 1 & 66.0 \\
0 & 10 & 61.7 \\
0.5 & 0.01 & 62.3 \\
0.5 & 0.1 & 69.6 \\
0.5 & 1 & 63.6 \\
0.5 & 10 & 53.8 \\
1.0 & 0.01 & 65.3 \\
1.0 & 0.1 & 69.6 \\
\bottomrule
\end{tabular}
\end{table}

%% file: tables/realworld_loss_figures.tex
\begin{figure*}[t]
\centering
\includegraphics[width=0.96\textwidth]{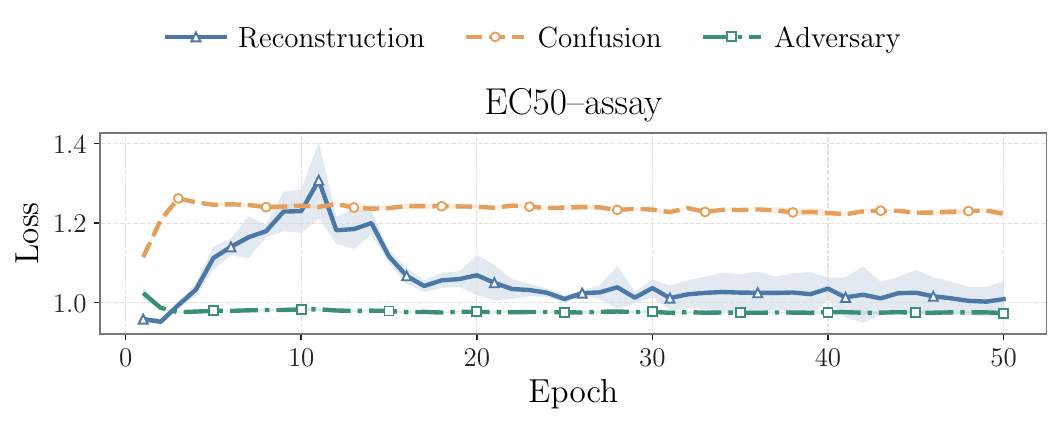}
\caption{\textbf{Training dynamics on EC50-assay.} The plot uses the validation-selected $(\alpha,\beta,K)=(0,10,3)$. Reconstruction, confusion, and adversary losses use solid, dashed, and dash-dotted lines with triangle, circle, and square markers, respectively. Lines are means over three seeds and shaded regions denote mean $\pm$ sample standard deviation.}
\label{fig:loss_ec50_assay}
\end{figure*}

\begin{figure*}[t]\centering
\includegraphics[width=0.96\textwidth]{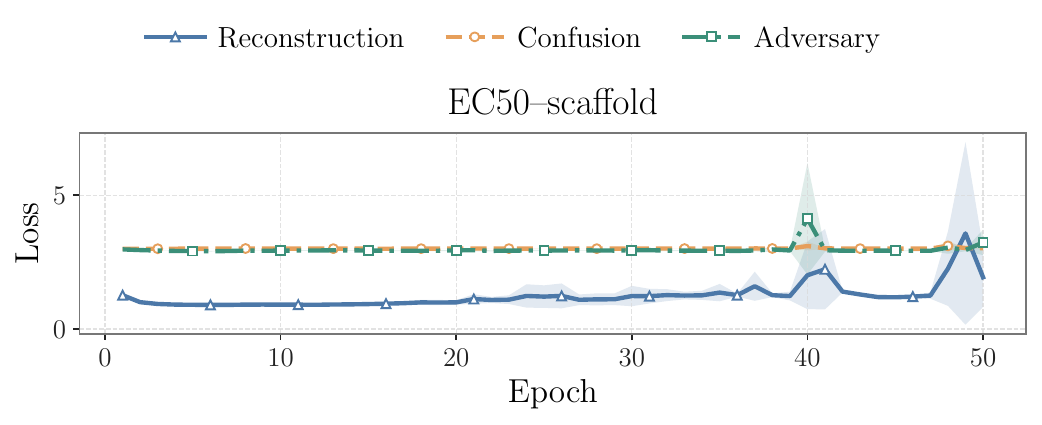}
\caption{\textbf{Training dynamics on EC50-scaffold.} The validation-selected $(\alpha,\beta,K)$ configuration is $(1,10,20)$. Lines and shaded regions follow the convention of Figure~\ref{fig:loss_ec50_assay}.}
\label{fig:loss_ec50_scaffold}\end{figure*}

\begin{figure*}[t]\centering
\includegraphics[width=0.96\textwidth]{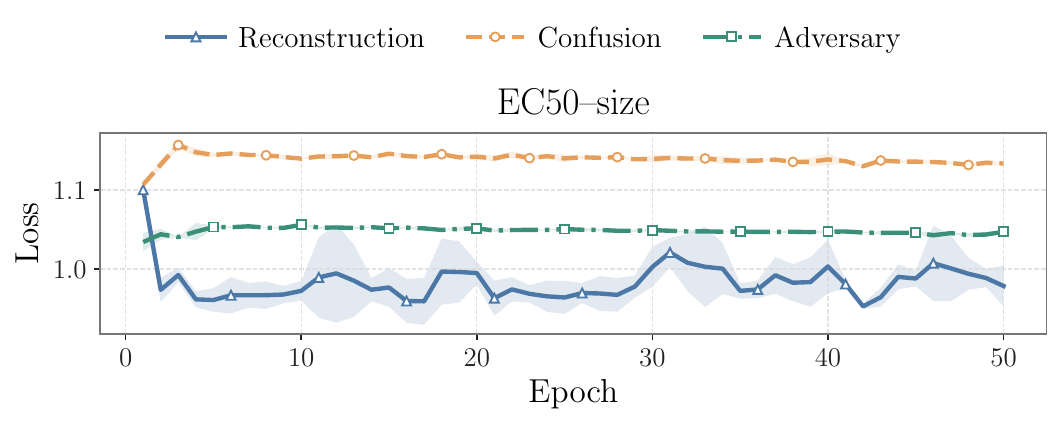}
\caption{\textbf{Training dynamics on EC50-size.} The validation-selected $(\alpha,\beta,K)$ configuration is $(0.5,10,3)$. Lines and shaded regions follow the convention of Figure~\ref{fig:loss_ec50_assay}.}
\label{fig:loss_ec50_size}\end{figure*}

\begin{figure*}[t]\centering
\includegraphics[width=0.96\textwidth]{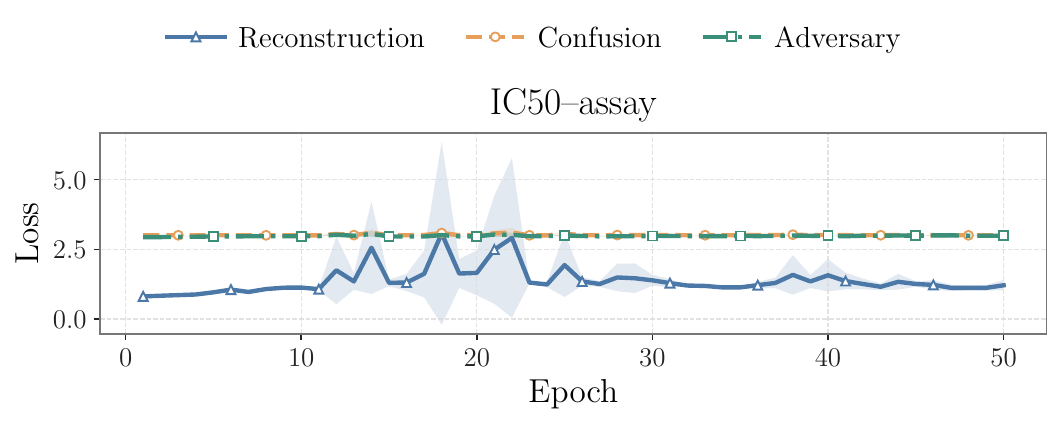}
\caption{\textbf{Training dynamics on IC50-assay.} The validation-selected $(\alpha,\beta,K)$ configuration is $(1,10,20)$. Lines and shaded regions follow the convention of Figure~\ref{fig:loss_ec50_assay}.}
\label{fig:loss_ic50_assay}\end{figure*}

\begin{figure*}[t]\centering
\includegraphics[width=0.96\textwidth]{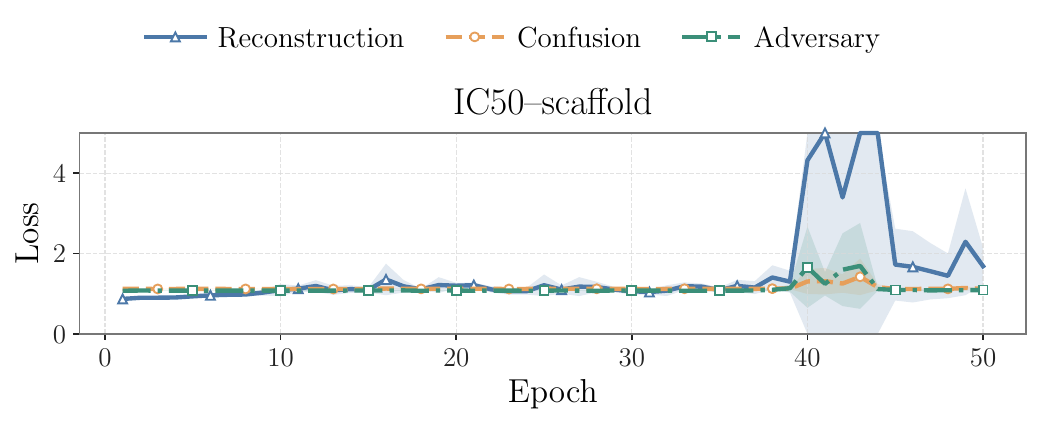}
\caption{\textbf{Training dynamics on IC50-scaffold.} The validation-selected $(\alpha,\beta,K)$ configuration is $(0.5,10,3)$. The vertical axis is truncated at 5 to show the stable-loss range; values above this limit are placed at the boundary and connected to the next sampled vertex. Lines and shaded regions follow the convention of Figure~\ref{fig:loss_ec50_assay}.}
\label{fig:loss_ic50_scaffold}\end{figure*}

\begin{figure*}[t]\centering
\includegraphics[width=0.96\textwidth]{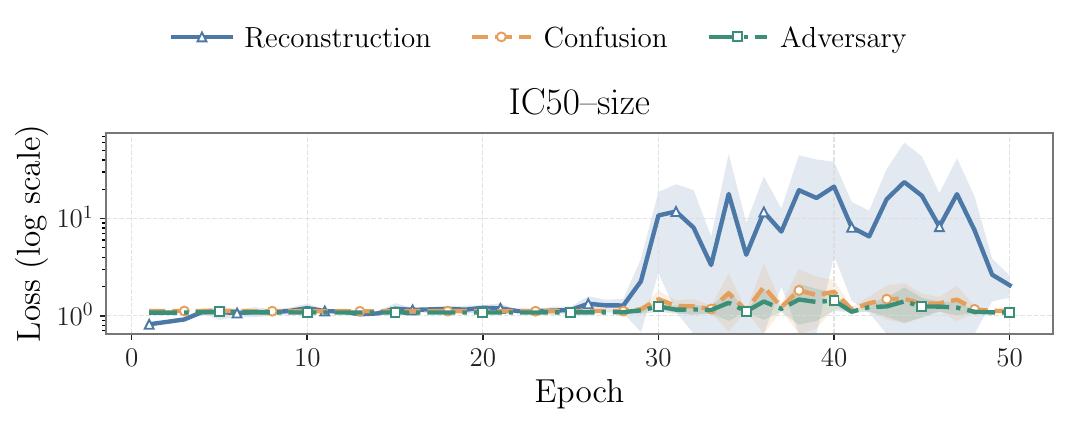}
\caption{\textbf{Training dynamics on IC50-size.} The validation-selected $(\alpha,\beta,K)$ configuration is $(0.5,10,3)$. The vertical axis is logarithmic because reconstruction has a substantially larger transient scale. Lines and shaded regions follow the convention of Figure~\ref{fig:loss_ec50_assay}.}
\label{fig:loss_ic50_size}\end{figure*}